\newtheorem{theorem}{Theorem}
\newtheorem{lemma}[theorem]{Lemma}
\newcommand{\eg}{\emph{e.g. ,}}
\newcommand{\ie}{\emph{i.e. ,}}
\newcommand{\etal}{\emph{et.al.}}
\begin{document}
%


\title{Universal Adder Neural Networks}
%
%
%
%

\author{Hanting Chen, Yunhe Wang,~\IEEEmembership{Member,~IEEE}, Chang Xu,~\IEEEmembership{Member,~IEEE},\\Chao Xu, Chunjing Xu, and Tong Zhang,~\IEEEmembership{Fellow,~IEEE}
\IEEEcompsocitemizethanks{
\IEEEcompsocthanksitem Hanting Chen and Chao Xu are with the Key Laboratory of Machine Perception (Ministry of Education) and Coopertative Medianet Innovation Center, School of EECS, Peking University, Beijing 100871, P.R. China. E-mail:htchen@pku.edu.cn, xuchao@cis.pku.edu.cn
\IEEEcompsocthanksitem Hanting Chen, Yunhe Wang and Chunjing Xu are with the Noah's Ark Laboratory, Huawei Technologies Co., Ltd, HuaWei Building, No.3 Xinxi Road, Shang-Di Information Industri Base, Hai-Dian District, Beijing 100085, P.R. China. E-mail:htchen@pku.edu.cn, yunhe.wang@huawei.com, xuchunjing@huawei.com
\IEEEcompsocthanksitem Chang Xu is with the School of Computer Science in the Faculty of Engineering and Information Technologies at The University of Sydney, J12 Cleveland St, Darlington NSW 2008, Australia. E-mail: c.xu@sydney.edu.au
\IEEEcompsocthanksitem Tong Zhang is with the School of Computer Science and Engineering, and Mathematics at the Hong Kong University of Science and Technology, Clear Water Bay, Kowloon, Hong Kong. E-mail: tongzhang@tongzhang-ml.org
\IEEEcompsocthanksitem Hanting Chen and Yunhe Wang contributed equally to this manuscript.
\IEEEcompsocthanksitem Correspondence to Chang Xu.
}}

\IEEEtitleabstractindextext{%
\begin{abstract}

Compared with cheap addition operation, multiplication operation is of much higher computation complexity. The widely-used convolutions in deep neural networks are exactly cross-correlation to measure the similarity between input feature and convolution filters, which involves massive multiplications between float values. In this paper, we present adder networks (AdderNets) to trade these massive multiplications in deep neural networks, especially convolutional neural networks (CNNs), for much cheaper additions to reduce computation costs. In AdderNets, we take the $\ell_1$-norm distance between filters and input feature as the output response. We first develop a theoretical foundation for AdderNets, by showing that both the single hidden layer AdderNet and the width-bounded deep AdderNet with ReLU activation functions are universal function approximators. An approximation bound for AdderNets with a single hidden layer is also presented. We further analyze the influence of this new similarity measure on the optimization of neural network and develop a special training scheme for AdderNets. Based on the gradient magnitude, an adaptive learning rate strategy is proposed to enhance the training procedure of AdderNets.  AdderNets can achieve a 75.7\% Top-1 accuracy and a 92.3\% Top-5 accuracy using ResNet-50 on the ImageNet dataset without any multiplication in the convolutional layer.
\end{abstract}

\begin{IEEEkeywords}
Efficient network, deep learning, computer vision, energy consumption.
\end{IEEEkeywords}}

\maketitle

\IEEEdisplaynontitleabstractindextext

%
\IEEEpeerreviewmaketitle

\section{Introduction}
Given the advent of Graphics Processing Units (GPUs), deep convolutional neural networks (CNNs) with billions of floating number multiplications could receive speed-ups and make important strides in a large variety of computer vision tasks, \eg image classification~\cite{VGG,krizhevsky2012imagenet}, object detection~\cite{ren2015faster}, segmentation~\cite{long2015fully}, and human face verification~\cite{wen2016discriminative}. However, the high-power consumption of these high-end GPU cards (\eg 250W+ for GeForce RTX 2080 Ti) has blocked modern deep learning systems from being deployed on mobile devices, \eg smart phone, camera, and watch. Existing GPU cards are far from svelte and cannot be easily mounted on mobile devices. Though the GPU itself only takes up a small part of the card, we need many other hardware for supports, \eg memory chips, power circuitry, voltage regulators and other controller chips. It is therefore necessary to study efficient deep neural networks that can run with affordable computation resources on mobile devices. 

Addition, subtraction, multiplication and division are the four most basic operations in mathematics. It is widely known that multiplication is slower than addition, but most of the computations in deep neural networks are multiplications between float-valued weights and float-valued activations during the forward inference. There are thus many papers on how to trade multiplications for additions, to speed up deep learning. The seminal work~\cite{courbariaux2015binaryconnect} proposed BinaryConnect to force the network weights to be binary (\eg -1 or 1), so that many multiply-accumulate operations can be replaced by simple accumulations.  After that, Hubara~\etal~\cite{hubara2016binarized} proposed BNNs, which binarized not only weights but also activations in convolutional neural networks at run-time. Moreover, Rastegari~\etal~\cite{rastegari2016xnor} introduced scale factors to approximate convolutions using binary operations and outperform~\cite{hubara2016binarized,rastegari2016xnor} by large margins. Zhou~\etal~\cite{zhou2016dorefa} utilized low bit-width gradient to accelerate the training of binarized networks. Cai~\etal~\cite{cai2017deep} proposed an half-wave Gaussian quantizer for forward approximation, which achieved much closer performance to full precision networks. 

Though binarizing filters of deep neural networks significantly reduces the computation cost, the original recognition accuracy often cannot be preserved. In addition, the training procedure of binary networks is not stable and usually requests a slower convergence speed with a small learning rate. Convolutions in classical CNNs are actually cross-correlation to measure the similarity of two inputs. Researchers and developers are used to taking convolution as a default operation to extract features from visual data, and introduce various methods to accelerate the convolution, even if there is a risk of sacrificing network capability. But there is hardly no attempt to replace convolution with another more efficient similarity measure that is better to only involve additions. In fact, additions are of much lower computational complexities than multiplications. Thus, we are motivated to investigate the feasibility of replacing multiplications by additions in convolutional neural networks.

\begin{figure*}[t]
	\centering
	\begin{tabular}{cc}
		\includegraphics[width=0.48\linewidth]{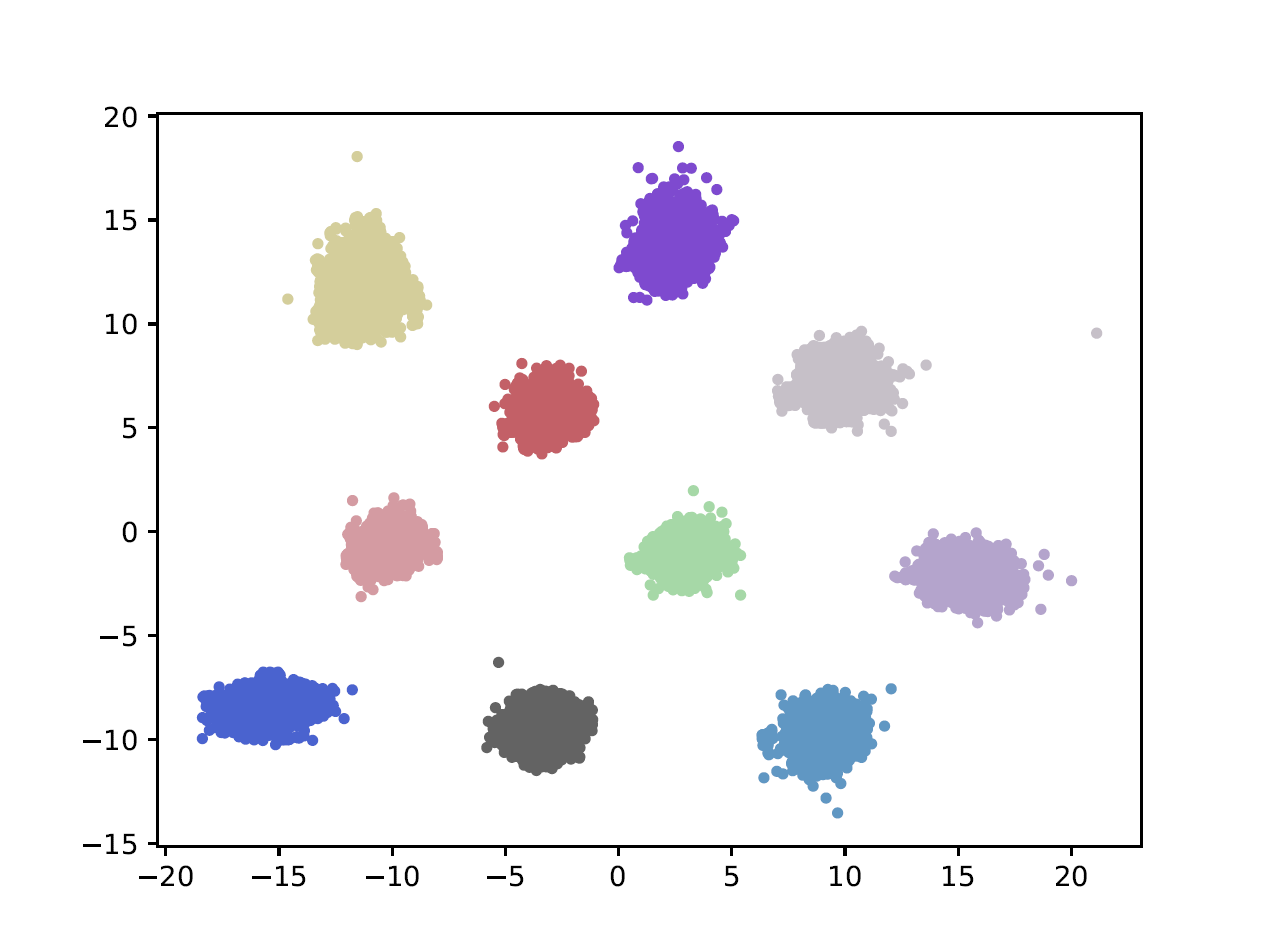} &
		\quad \includegraphics[width=0.48\linewidth]{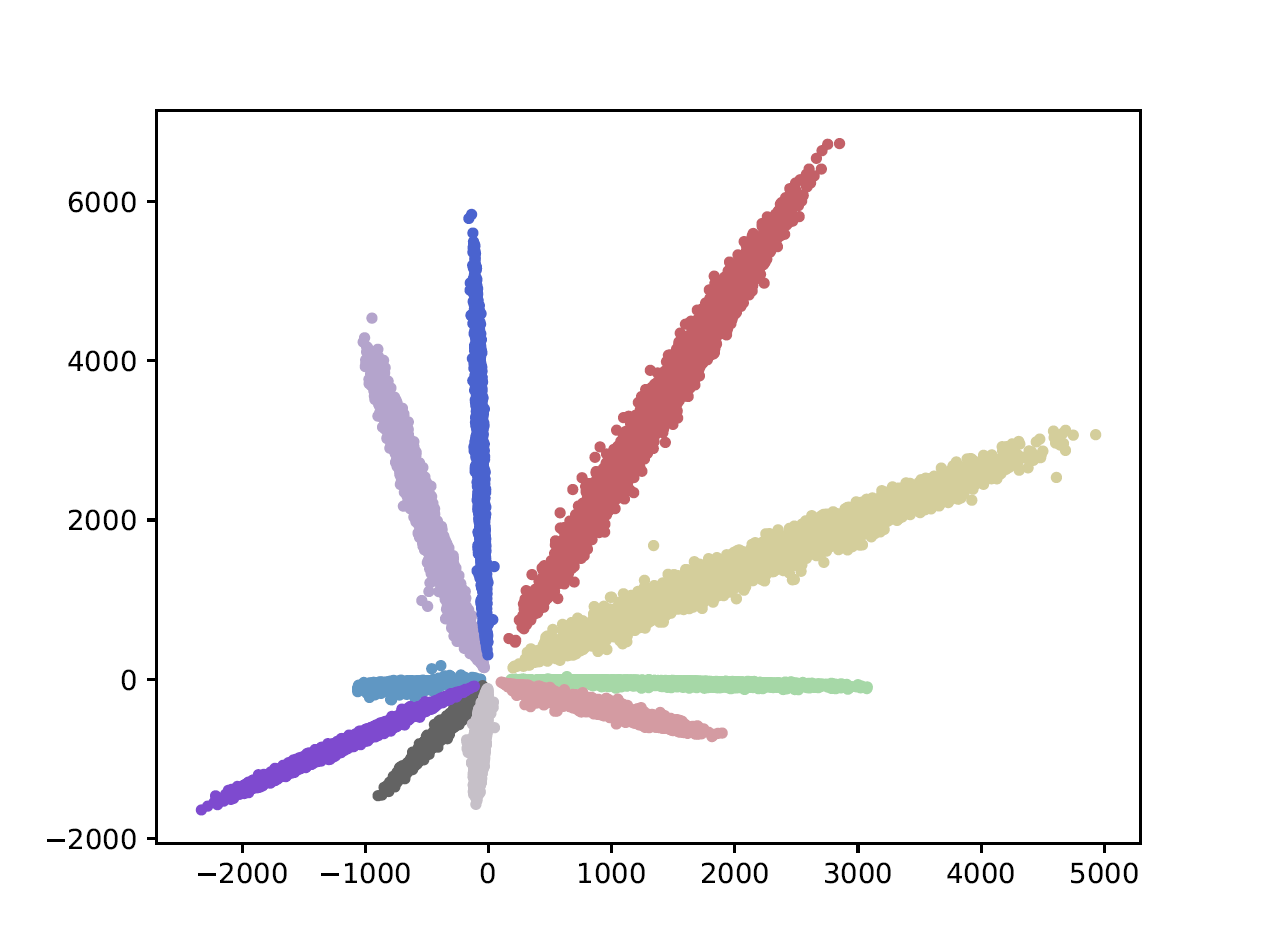} \\
		(a) Visualization of features in AdderNets  &(b)  Visualization of features in CNNs  \\
	\end{tabular}
	\caption{Visualization of features in AdderNets and CNNs. Features of CNNs in different classes are divided by their angles. In contrast, features of AdderNets tend to be clustered towards different class centers, since AdderNets use the $\ell_1$-norm to distinguish different classes. The visualization results suggest that $\ell_1$-distance can serve as a similarity measure the distance between the filter and the input feature in deep neural networks}
	\label{Fig:visualfea}
\end{figure*}

In this paper, we propose adder networks that maximize the use of addition while abandoning convolution operations. Given a series of small template as ``filters’’ in the neural network, $\ell_1$-distance could be an efficient measure to summarize absolute differences between the input signal and the template as shown in Figure~\ref{Fig:visualfea}. Since subtraction can be easily implemented through addition by using its complement code, $\ell_1$-distance could be a hardware-friendly measure that only has additions, and naturally becomes an efficient alternative of the convolution to construct neural networks. To give a theoretical guarantee for AdderNets, we prove that both the single hidden layer AdderNet and the width-bounded AdderNet can approximate any Lebesgue integrable function in a compact set. This result is comparable to the universal approximation results for traditional neural networks. We also present a approximation bound for AdderNets with a single hidden layer. To optimize the AdderNet effectively, we design the $\ell_2$ to $\ell_1$ training scheme and the adaptive learning rate scaling to ensure sufficient updates of the templates and a better network convergence. The proposed AdderNets are deployed on several benchmarks, and experimental results demonstrate that AdderNets can achieve comparable recognition accuracy to conventional CNNs.

A preliminary version of this work has been presented earlier~\cite{chen2020addernet}. The present work improves the initial version significantly. First, we propose the universal approximation theorem of two-layer AdderNet and width-bounded deep AdderNet, which lays the foundation for AdderNets in various deep learning-based applications. 
Second, we introduce a general $l_p$-AdderNet, where $1\leq p\leq2$. By gradually reducing $p$ from 2 to 1 during the iterations, the training difficulty of $l_1$-AdderNet could be reduced. 
In addition, extended experiments are conducted to show the approximation capacity of AdderNets and the improvement of the new training scheme. 

This paper is organized as follows. Section 2 revisits the related works on neural network compression. Section 3 proposes the AddderNets and discusses the universal approximation theorem of AdderNets. Section 4 introduces the optimization scheme of AdderNets using $\ell_2$ to $\ell_1$ training and adaptive learning rate scaling. Section 5 conducts experiments to show the approximation capacity and effectiveness of the proposed AdderNets on toy examples and classification tasks, respectively. Section 6 concludes this paper.

\section{Related works}\label{sec:related}

To reduce the computational complexity of convolutional neural networks, a number of works have been proposed for eliminating useless calculations. 

Pruning based methods aims to remove redundant weights to compress and accelerate the original network. Denton~\etal~\cite{SVD} decomposed weight matrices of fully-connected layers into simple calculations by exploiting singular value decomposition (SVD). Han~\etal~\cite{han2015deep} proposed discarding subtle weights in pre-trained deep networks to omit their original calculations without affecting the performance. Wang~\etal~\cite{wang2016cnnpack} further converted convolution filters into the DCT frequency domain and eliminated more floating number multiplications. In addition, Hu~\etal~\cite{Trimming} discarded redundant filters with less impacts to directly reduce the computations brought by these filters. Luo~\etal~\cite{luo2017thinet} discarded redundant filters according to the reconstruction error.  Hu~\etal~\cite{hu2020triple} proposed dubbed Robust Dynamic Inference Networks (RDI-Nets), which allows for each input to adaptively choose one of the multiple output layers to output its prediction. Wang~\etal~\cite{wang2019e2} proposed a E2-Training method, which can train deep neural networks with over 80\% energy savings.

Instead of directly reducing the computational complexity of a pre-trained heavy neural network, lot of works focused on designing novel blocks or operations to replace the conventional convolution filters. Howard~\etal~\cite{howard2017mobilenets} designed MobileNet, which decompose the conventional convolution filters into the point-wise and depth-wise convolution filters with much fewer FLOPs. Zhang~\etal~\cite{zhang2018shufflenet} combined group convolutions~\cite{ResNeXt} and a channel shuffle operation to build efficient neural networks with fewer computations. Wu~\etal~\cite{wu2018shift} presented a parameter-free ``shift" operation with zero flop and zero parameter to replace conventional filters and largely reduce the computational and storage cost of CNNs. Wang~\etal~\cite{Versatile} developed versatile convolution filters to generate more useful features utilizing fewer calculations and parameters. Xu~\etal~\cite{juefei2018perturbative} proposed perturbative neural networks to replace convolution and instead computes its response as a weighted linear combination of non-linearly activated additive noise perturbed inputs. Han~\etal~\cite{han2019ghostnet} proposed GhostNet to generate more features from cheap operations and achieve the state-of-the-art performance on lightweight architectures.

Besides eliminating redundant weights or filters in deep convolutional neural networks, Hinton~\etal~\cite{hinton2015distilling} proposed the knowledge distillation (KD)
scheme, which transfer useful information from a heavy teacher network to a portable student network by minimizing the Kullback-Leibler divergence between their outputs. Besides mimic the final outputs of the teacher networks, Romero~\etal~\cite{romero2014fitnets} exploit the hint layer to distill the information in features of the teacher network to the student network. You~\etal~\cite{you2017learning} utilized multiple teachers to guide the training of the student network and achieve better performance. Yim~\etal~\cite{yim2017gift} regarded the relationship between features from two layers in the teacher network as a novel knowledge and introduced the FSP (Flow of Solution Procedure) matrix to transfer this kind of information to the student network.

Nevertheless, the compressed networks using these algorithms still contain massive multiplications, which costs enormous computation resources. As a result, subtractions or additions are of much lower computational complexities when compared with multiplications. However, they have not been widely investigated in deep neural networks, especially in the widely used convolutional networks. Therefore, we propose to minimize the numbers of multiplications in deep neural networks by replacing them with subtractions or additions.

\section{Adder Networks}\label{sec:method}

Consider a filter $F\in \mathbb{R}^{d\times d\times c_{in}\times c_{out}}$ in an intermediate layer of the deep neural network, where kernel size is $d$, input channel is $c_{in}$ and output channel is $c_{out}$. The input feature is defined as $X\in \mathbb{R}^{H\times W \times c_{in}}$, where $H$ and $W$ are the height and width of the feature, respectively. The output feature $Y$ indicates the similarity between the filter and the input feature,  
\begin{equation}
\small
Y(m,n,t) = \sum_{i=0}^{d}\sum_{j=0}^{d}\sum_{k=0}^{c_{in}} S(X(m+i,n+j,k), F(i,j,k,t)), \label{fcn:conv}
\end{equation} 
where $S(\cdot, \cdot)$ is a pre-defined similarity measure.  If cross-correlation is taken as the metric of distance, \ie $S(x, y) = x \times y$, Eq. (\ref{fcn:conv}) becomes the convolution operation. Eq. (\ref{fcn:conv}) can also implies the calculation of a fully-connected layer when $d=1$. In fact, there are many other metrics to measure the distance between the filter and the input feature. However, most of these metrics involve multiplications, which bring in more computational cost than additions.

\subsection{Adder Networks}

We are therefore interested in deploying distance metrics that maximize the use of additions.  $\ell_1$ distance calculates the sum of the absolute differences of two points’ vector representations, which contains no multiplication. Hence, by calculating $\ell_1$ distance between the filter and the input feature, Eq. (\ref{fcn:conv}) can be reformulated as:
\begin{equation}
\small
Y(m,n,t) = -\sum_{i=0}^{d}\sum_{j=0}^{d}\sum_{k=0}^{c_{in}} \vert X(m+i,n+j,k) - F(i,j,k,t)\vert. \label{fcn:l1}
\end{equation}
Addition is the major operation in $\ell_1$ distance measure, since subtraction can be easily reduced to addition by using complement code. With the help of $\ell_1$ distance, similarity between the filters and features can be efficiently computed. 

Although both $\ell_1$ distance Eq.  (\ref{fcn:l1}) and cross-correlation in Eq. (\ref{fcn:conv}) can measure the similarity between filters and inputs, there are some differences in their outputs. The output of a convolution filter, as a weighted summation of values in the input feature map, can be positive or negative, but the output of an adder filter is always negative. Hence, we resort to batch normalization for help, and the output of adder layers will be normalized to an appropriate range and all the activation functions used in conventional CNNs can then be used in the proposed AdderNets. Although the batch normalization layer involves multiplications, its computational cost is significantly lower than that of the convolutional layers and can be omitted. Considering a convolutional layer with a filter $F\in \mathbb{R}^{d\times d\times c_{in}\times c_{out}}$, an input $X\in \mathbb{R}^{H\times W \times c_{in}}$ and an output $Y\in \mathbb{R}^{H'\times W' \times c_{out}}$, the computation complexity of convolution and batch normalization is $\mathcal{O}(d^2c_{in}c_{out}HW)$ and $\mathcal{O}(c_{out}H'W')$, respectively. In practice, given an input channel number $c_{in}=512$ and a kernel size $d=3$ in ResNet~\cite{he2016deep}, we have $\frac{d^2c_{in}c_{out}HW}{c_{out}H'W'}\approx 4068$. Since batch normalization layer has been widely used in the state-of-the-art convolutional neural networks, we can simply upgrade these networks into AddNets by replacing their convolutional layers into adder layers to speed up the inference and reduces the energy cost.

Intuitively, Eq. (\ref{fcn:conv}) has a connection with template matching~\cite{brunelli2009template} in computer vision, which aims to find the parts of an image that match the template. $F$ in Eq. (\ref{fcn:conv}) actually works as a template, and we calculate its matching scores with different regions of the input feature $X$. Since various metrics can be utilized in template matching, it is natural that $\ell_1$ distance can be utilized to replace the cross-correlation in Eq. (\ref{fcn:conv}). Note that there are few works also focus on utilizing different metrics in deep networks. Wang~\etal~\cite{wang2019kervolutional} aims to achieve high performance by employing complex metrics. Zhang~\etal~\cite{zhang2021towards} proposed to use $\ell_{\inf}$ distance as basic operation and directly provides a theoretical guarantee of the certified robustness. In contrast, we focus on the $\ell_1$ distance to minimize the energy consumption.

\subsection{Universal Approximation by AdderNets}

There have been many studies on the approximation capacity of neural networks. Hornik~\etal~\cite{hornik1989multilayer} first proved that feedforward networks with one hidden layer using an arbitrary squashing function as the activation function are capable of approximating any Borel measurable function from a finite dimensional space to another up to any desired accuracy. This property is referred to as universal approximation. 
Leshno~\etal~\cite{leshno1993multilayer} further extended the results to feedforward networks with non-polynomial activation functions. Besides the traditional feedforward networks, Schafer~\etal~\cite{schafer2006recurrent} proved that the universal approximation property of RNNs. Yun~\etal~\cite{yun2019transformers} demonstrated that transformers are universal approximators of sequence-to-sequence functions. 

Nevertheless, the universal approximation property of AdderNets has not been established. In fact, the AdderNet calculates the Manhattan distance between the input features and filters, which is fundamentally different to traditional feedforward networks using cross correlation. Thus, most operations in AdderNets are cheaper additions which are significantly more energy efficient than massive multiplications employed in the traditional neural networks. If we can utilize adder units to construct universal approximators, then the performance of using AdderNets on any deep learning based applications can be guaranteed.

Here we briefly review the annotation of each adder layer along with the batch normalization layer and  present two universal approximation theorems for AdderNets,  one for shallow networks with a single hidden layer and the other for deep networks with bounded width. 

\textbf{Definition 1.} For the arbitrary input data $\mathbf{X}\in \mathbb{R}^{d_{in}}$ the weight parameters $\mathbf{W} \in \mathbb{R}^{d_{in}\times d_{out}}$ where $d_{in},d_{out} \in \mathbb{N} $, a one-layer adder net $L$ using batch normalization for calculate the outputs $Y\in \mathbb{R}^{d_{out}}$ can be formulated as:
\begin{equation}
{L: \mathbb{R}^{d_{in}} \rightarrow \mathbb{R}^{d_{out}}: L(\mathbf{X})_i = a_i  \Vert \mathbf{W}_{i} - \mathbf{X}\Vert_1 +b_i},
\label{layer}
\end{equation}
where $\mathbf{W}_i \in \mathbb{R}^{d_{in}}$ is the $i$-th column in $\mathbf{W}$, $a_i,b_i \in \mathbb{R}$ are the scale factor and bias for the $i$-th output channel, respectively. Based on the above definition for a single adder layer, we then define the multi-layer adder network.

\textbf{Definition 2.} For an arbitrary activation function $G(\cdot)$ for mapping $\mathbb{R}$ to $\mathbb{R}$ and the $d$-dimentional input data $\mathbf{X}\in \mathbb{R}^{d}$ a $n$ layers AdderNet A for calculating a scalar can be formulated as 
\begin{equation}
{A: \mathbb{R}^{d} \rightarrow \mathbb{R}}: A(\mathbf{X}) = L_n(G(L_{n-1}(\dots G(L_1(\mathbf{X})) \dots))).
\end{equation}

\textbf{Definition 3.} A width-bounded AdderNet means the maximum number of neuron in each layer of this network is bounded by a given constant $w\in\mathbb{N}$.

Then, we aim to prove that both a two-layer AdderNet and a width-bounded AdderNet with the widely used non-linear activation function $G(x)= \mbox{ReLU}(x) = \max(x,0)$ is a universal approximator. Note that the above definition does not contain any convolutional layers, max pooling and others operations, because we investigate the AdderNet itself for constructing the universal approximator.



\begin{theorem}[Universal Approximation of Two-Layer AdderNet]
	\label{theorem1}
	For any $n \in \mathbb{N}$, the family of AdderNet with ReLU as activation function can universally approximate any $f\in \ell_1(K)$, where $K$ is a compact set of $\mathbb{R}^d$. In other words, for any $\epsilon > 0$, there is an two-layer AdderNet $A$ such that:
	\begin{equation}
	\int_K |f(\mathbf{X}) - A(\mathbf{X})| d\mathbf{X} \leq \epsilon ,
	\end{equation}
	where $\mathbf{X}\in K$.
\end{theorem}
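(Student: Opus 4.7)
The plan is to reduce Theorem~\ref{theorem1} to two subclaims: (a) a representation lemma showing that two-layer AdderNets with ReLU realize every function in the family
\[
\mathcal{G} := \Big\{\mathbf{X}\mapsto \sum_{i=1}^N c_i\,\mbox{ReLU}(\Vert \mathbf{W}_i - \mathbf{X}\Vert_1 + b_i) + d : N\in\mathbb{N},\; c_i,b_i,d\in\mathbb{R},\; \mathbf{W}_i\in\mathbb{R}^d\Big\},
\]
and (b) a density claim that $\mathcal{G}$ is dense in $\ell_1(K)$.

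For (a), the subtle point is that the second adder layer outputs $a \sum_i \lvert w_i - h_i \rvert + b$ with a \emph{single} shared scalar $a$, which at first glance forces a common coefficient on every hidden unit. I will sidestep this using two degrees of freedom. First, the pre-ReLU scale $a_i^{(1)}$ in the first layer together with positive homogeneity of ReLU lets me freely rescale each hidden activation $h_i = \mbox{ReLU}(a_i^{(1)} \Vert \mathbf{W}_i - \mathbf{X}\Vert_1 + b_i^{(1)})$. Second, since $K$ is compact each $h_i$ takes values in a bounded interval $[0,M_i]$; by choosing the second-layer weight $w_i^{(2)}$ strictly below $0$ or strictly above $M_i$, the term $\lvert w_i^{(2)} - h_i \rvert$ becomes affine in $h_i$ with a controllable sign $\pm 1$. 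Multiplying by the shared scalar $a$ and summing then produces $\sum_i c_i h_i + \mathrm{const}$ with arbitrary individual $c_i \in \mathbb{R}$, and after absorbing internal scales this yields any element of $\mathcal{G}$.

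For (b), my plan is a chain of approximations. Bounded continuous functions are dense in $\ell_1(K)$ for compact $K$, so it suffices to uniformly approximate any $f\in C(K)$. First, I will show that the class $\mathcal{S}$ of $\ell_1$-radial bumps $\mathbf{X}\mapsto e^{-\beta\Vert\mathbf{W}-\mathbf{X}\Vert_1^2}$ and their finite linear combinations is dense in $C(K)$: after the rescaling $\mathbf{y}=\mathbf{u}/\sqrt{\beta}$ the kernel $e^{-\beta\Vert\mathbf{y}\Vert_1^2}$ has finite mass of order $\beta^{-d/2}$, so a suitable normalization yields an approximate identity on $\mathbb{R}^d$, and a convolution-plus-discretization argument on compact sets then delivers $f$ uniformly by finite sums in $\mathcal{S}$. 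Second, each univariate profile $t\mapsto e^{-\beta t^2}$ restricted to $[0,\mathrm{diam}_1(K)]$ is continuous and uniformly approximable by a piecewise linear spline $\sum_j a_j\,\mbox{ReLU}(t+b_j)$. Composing this spline with $t=\Vert\mathbf{W}_i-\mathbf{X}\Vert_1$ at each bump center $\mathbf{W}_i$ and summing produces an element of $\mathcal{G}$ that approximates $f$ uniformly on $K$, hence in $\ell_1(K)$.

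The main obstacle is the $C(K)$-density of $\mathcal{S}$. Because the exponent uses $\Vert\cdot\Vert_1^2$ rather than the usual Euclidean norm squared, $\mathcal{S}$ is not a standard Gaussian RBF family and the approximate-identity property must be justified by hand: one verifies that $e^{-\beta\Vert\mathbf{y}\Vert_1^2}$ concentrates at the origin as $\beta\to\infty$ (using $\Vert\mathbf{y}\Vert_2\leq\Vert\mathbf{y}\Vert_1$ for a dominating Euclidean-Gaussian tail bound) and then appeals to uniform continuity of $f$ on $K$ together with a Riemann-sum discretization of the convolution. Once this density step is in place, chaining it with the spline approximation of each radial profile and the representation lemma above closes the proof.
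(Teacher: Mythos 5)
Your proposal is correct and shares the paper's overall decomposition: a representation lemma showing that two-layer AdderNets realize the family $\sum_i a_i\,\mbox{ReLU}(\Vert\mathbf{W}_i-\mathbf{X}\Vert_1+b_i)$, followed by a density argument for that family in $\ell_1(K)$. Your representation step is essentially identical to the paper's Lemma~1 — placing the second-layer weight strictly outside the range $[0,M_i]$ of each hidden activation so that the absolute value resolves to an affine function with a controllable sign is exactly the paper's $\pm M$ offset construction, combined with positive homogeneity of ReLU to carry the magnitudes in the first layer. Where you genuinely diverge is the density step. The paper invokes Park and Sandberg's universal approximation theorem for RBF networks as a black box, applied (implicitly) to a kernel of the form $K(\mathbf{u})=r(\Vert\mathbf{u}\Vert_1)$ with $r$ a ReLU hat function, and leaves unstated both the choice of kernel and the regrouping of ReLU terms sharing a center into a single radial bump. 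You instead rebuild the density from scratch: you take the $\ell_1$-Gaussian bumps $e^{-\beta\Vert\mathbf{W}-\mathbf{X}\Vert_1^2}$, verify the approximate-identity property by hand (integrability via $\Vert\mathbf{u}\Vert_2\leq\Vert\mathbf{u}\Vert_1$, concentration as $\beta\to\infty$, Riemann-sum discretization of the convolution), and then approximate the univariate radial profile by a ReLU spline on a compact interval. This buys a self-contained and fully explicit argument — in particular it makes visible the step the paper elides, namely that the ReLU family reaches radial kernels only after the one-dimensional profile is spline-approximated — at the cost of redoing the approximate-identity analysis that the cited RBF theorem packages. Both routes are sound; the remaining details in yours (extending $f$ continuously off $K$ before convolving, and uniform control of the Riemann sums over $\mathbf{X}\in K$) are routine.
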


In order to prove Theorem~\ref{theorem1}, we construct a set of function $g(\mathbf{X})=\sum_i a_i \mbox{ReLU}(\Vert \mathbf{W}_i - \mathbf{X} \Vert_1 + b_i)$, and show that it can be represented by a two-layer AdderNet. Thus, it is sufficient to prove that $g(\mathbf{X})$ is a universal approximator.

\begin{lemma}
	\label{lemma1}
	The set of functions $g(\mathbf{X})=\sum_i a_i \mbox{ReLU}(\Vert \mathbf{W}_i - \mathbf{X} \Vert_1 + b_i)$ can be represented by two-layer AdderNets, where $\mathbf{X}$ belongs to a compact set $K$ of $\mathbb{R}^{d}$ is the input vector, $\mathbf{W}_i\in \mathbb{R}^d, a_i\in \mathbb{R}, b_i \in \mathbb{R}$.
\end{lemma}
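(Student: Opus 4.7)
The plan is to explicitly construct, given parameters $\mathbf{W}_i \in \mathbb{R}^d$, $a_i \in \mathbb{R}$, $b_i\in\mathbb{R}$ for $i=1,\dots,m$, a two-layer AdderNet $A$ (in the sense of Definition~2 with activation $G = \mathrm{ReLU}$) whose output agrees with $g(\mathbf{X}) = \sum_{i=1}^{m} a_i\, \mathrm{ReLU}(\Vert \mathbf{W}_i - \mathbf{X}\Vert_1 + b_i)$ on the compact set $K$. The first hidden layer will be set up to produce, after ReLU, the positively scaled activations $h_i(\mathbf{X}) = |a_i|\,\mathrm{ReLU}(\Vert \mathbf{W}_i - \mathbf{X}\Vert_1 + b_i)$, and the second (scalar-valued) adder layer will be tuned to recombine these $h_i$'s with the correct signs into $\sum_i a_i \mathrm{ReLU}(\Vert \mathbf{W}_i - \mathbf{X}\Vert_1 + b_i)$.

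For the first layer $L_1: \mathbb{R}^d \to \mathbb{R}^m$, I would take $\mathbf{W}^{(1)}_i = \mathbf{W}_i$, $a^{(1)}_i = |a_i|$, and $b^{(1)}_i = |a_i|\, b_i$, so that $L_1(\mathbf{X})_i = |a_i|\,(\Vert \mathbf{W}_i - \mathbf{X}\Vert_1 + b_i)$. Applying ReLU componentwise and using the positive homogeneity identity $\mathrm{ReLU}(c\,y) = c\,\mathrm{ReLU}(y)$ for $c\geq 0$, we get $h_i(\mathbf{X}) := G(L_1(\mathbf{X}))_i = |a_i|\,\mathrm{ReLU}(\Vert \mathbf{W}_i - \mathbf{X}\Vert_1 + b_i) \geq 0$. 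Indices with $a_i = 0$ simply contribute zero and can be ignored.

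The main conceptual step is the second layer $L_2 : \mathbb{R}^m \to \mathbb{R}$, since by Definition~1 it must take the form $L_2(\mathbf{h}) = a^{(2)}\sum_{j=1}^{m} |W^{(2)}_j - h_j| + b^{(2)}$, which is a weighted $\ell_1$ norm rather than a free linear combination. The trick will be to exploit the nonnegativity of each $h_i$ (since it is a ReLU output) and the compactness of $K$. Partition $\{1,\dots,m\}$ into $P = \{i : a_i > 0\}$ and $N = \{i : a_i < 0\}$. For $i \in P$ choose $W^{(2)}_i = 0$, so $|W^{(2)}_i - h_i| = h_i$. For $i \in N$ choose $W^{(2)}_i = M$ for a common constant $M$ that is at least $\max_{\mathbf{X}\in K} h_i(\mathbf{X})$, which exists because $K$ is compact and $h_i$ is continuous; then $|W^{(2)}_i - h_i| = M - h_i$. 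Setting $a^{(2)} = 1$ yields $L_2(\mathbf{h}(\mathbf{X})) = \sum_{i\in P} h_i - \sum_{i\in N} h_i + (|N|M + b^{(2)})$, which equals $\sum_i a_i\,\mathrm{ReLU}(\Vert \mathbf{W}_i - \mathbf{X}\Vert_1 + b_i) + C$ for a constant $C$. Absorbing $C$ into the bias by taking $b^{(2)} = -|N|M$ completes the representation.

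The one subtle step where care is needed is the second-layer construction: without the ReLU nonnegativity of each $h_i$ the $\ell_1$ form of $L_2$ could not be collapsed to a signed linear combination, and without the compactness of $K$ there would be no uniform constant $M$ making $|M - h_i(\mathbf{X})| = M - h_i(\mathbf{X})$ hold for all $\mathbf{X}$. Verifying these two facts rigorously, together with the positive-homogeneity manipulation in the first layer, is the bulk of the argument; the remainder is simply matching indices and collecting constants.
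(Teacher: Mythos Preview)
Your construction is correct and follows essentially the same strategy as the paper: both build the first layer as $L_1(\mathbf{X})_i = |a_i|\,\Vert \mathbf{W}_i - \mathbf{X}\Vert_1 + |a_i|\,b_i$, apply ReLU, and then exploit compactness of $K$ to choose second-layer weights so that each absolute value $|W^{(2)}_i - h_i|$ collapses to $\pm h_i$ plus a constant, which is absorbed in the bias. The only cosmetic difference is that the paper shifts the positive-sign coordinates by $-M$ as well (using $|h_i + M| = h_i + M$), whereas you more economically take $W^{(2)}_i = 0$ there and rely directly on $h_i \ge 0$; your version is slightly cleaner but the idea is identical.
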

\begin{proof}
	We set the first layer of AdderNet as:
	
	\begin{equation}
	{L^1: \mathbb{R}^d \rightarrow \mathbb{R}^{t}: L^1(\mathbf{X})_i = \vert a_i\vert  \Vert \mathbf{W}_{i} - \mathbf{X}\Vert_1 +b_i},
	\end{equation}
	
	since $K$ is compact and $L^1$ is continuous, there exists $M \in \mathbb{R}$ such that: $\vert L^1(\mathbf{X})_i\vert < M , i = 1,2,\dots, t$. We then set the second layer as:
	
	\begin{equation}
	\begin{aligned}
	L^2: \mathbb{R}^{t} \rightarrow \mathbb{R}: L^2(x)& = \sum_{a_i>0} \vert \mathbf{X}_i + M \vert \\&+ \sum_{a_i \leq 0 } \vert \mathbf{X}_i - M\vert + (p_1+p_2)M,
	\end{aligned}
	\end{equation}
	where $p_1,p_2$ is the number of $i$ such that $a_i \geq 0$ or $a_i \leq 0$, respectively.
	
	Therefore, the two-layer AdderNet can be denote as:
	\begin{equation}
	\small
	\begin{aligned}
	A(x) =& \sum_{a_i>0} \vert \mbox{ReLU} (\vert a_i\vert  \Vert \mathbf{W}_{i} - \mathbf{X}\Vert_1 +b_i + M) \vert \\
	&+  \sum_{a_i \leq 0} \vert \mbox{ReLU} (\vert a_i\vert  \Vert\mathbf{W}_{i} - \mathbf{X}\Vert_1 +b_i) -M \vert - (p_1+p_2)M \\
	=&  \sum_{a_i>0} \mbox{ReLU} (\vert a_i\vert  \Vert \mathbf{W}_{i} - \mathbf{X}\Vert_1 +b_i) \\&-\sum_{a_i\leq 0} \mbox{ReLU} (\vert a_i\vert  \Vert \mathbf{W}_{i} - \mathbf{X}\Vert_1 +b_i)\\
	=& \sum_i a_i \mbox{ReLU}(\Vert \mathbf{W}_{i} - \mathbf{X}\Vert_1 + b_i) .
	\end{aligned}
	\end{equation}
\end{proof}

\begin{theorem}[\cite{park1991universal}]
	\label{theorem:rbf}
	The radio basis function networks:
	\begin{equation}
	\sum_{i}  a_i K(\frac{\mathbf{X}- \mathbf{Z}_i }{\sigma})
	\end{equation} is dense in $\ell_1(\mathbb{R}^d)$, where $\mathbf{X},\mathbf{Z}_i\in\mathbb{R}^d$, if $K$ is integrable bounded function such that $K$ is continuous almost everywhere and $\int K(\mathbf{X}) d\mathbf{X} \neq 0$.
\end{theorem}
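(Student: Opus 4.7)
The plan is to establish density via the classical approximate-identity plus Riemann-sum construction. Without loss of generality we may normalize so that $\int_{\mathbb{R}^d} K(\mathbf{X}) d\mathbf{X} = 1$, by absorbing the nonzero constant $1/\int K$ into the coefficients $a_i$. Define the scaled kernel $K_\sigma(\mathbf{Y}) = \sigma^{-d} K(\mathbf{Y}/\sigma)$; the $\sigma^{-d}$ factor is likewise absorbable into the coefficients, so it suffices to approximate any $f \in \ell_1(\mathbb{R}^d)$ in the $\ell_1$ norm by finite combinations of the form $\sum_i c_i K_\sigma(\mathbf{X} - \mathbf{Z}_i)$. Since $K \in \ell_1$ with unit integral, the family $\{K_\sigma\}_{\sigma > 0}$ is a standard approximate identity, so $\|f * K_\sigma - f\|_1 \to 0$ as $\sigma \to 0$ for every $f \in \ell_1(\mathbb{R}^d)$. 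Given $\epsilon > 0$, fix $\sigma$ small enough so that $\|f * K_\sigma - f\|_1 < \epsilon/2$; the task then reduces to approximating the convolution $f * K_\sigma$ by a finite sum of translates of $K_\sigma$.

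For the finite-sum step, I would first truncate: choose $R$ so large that $\|f - f \mathbf{1}_{[-R,R]^d}\|_1$ is negligible, and then approximate the truncation in $\ell_1$ by a simple function $\tilde f = \sum_i c_i \mathbf{1}_{Q_i}$ whose cubes $Q_i$ of side $\delta$ partition $[-R,R]^d$ with centers $\mathbf{Z}_i$. Because $\|K_\sigma\|_1 = 1$, convolution with $K_\sigma$ is a contraction on $\ell_1$, so $\|(f - \tilde f) * K_\sigma\|_1 \leq \|f - \tilde f\|_1$ is small. Writing
\begin{equation}
\tilde f * K_\sigma(\mathbf{X}) = \sum_i c_i \int_{Q_i} K_\sigma(\mathbf{X} - \mathbf{y}) d\mathbf{y},
\end{equation}
the midpoint rule suggests replacing each inner integral by $|Q_i| K_\sigma(\mathbf{X} - \mathbf{Z}_i)$, producing a function of the required form with coefficients $a_i = c_i |Q_i|$.

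The main obstacle is controlling this last midpoint-rule error in the $\ell_1$ norm, because $K$ is only bounded and a.e. continuous, not uniformly continuous, so pointwise substitution inside each cube cannot be bounded directly. The fix is an intermediate smoothing step: by density of $C_c(\mathbb{R}^d)$ in $\ell_1(\mathbb{R}^d)$ (which applies to $K$ since it is integrable, and the a.e. continuity plus boundedness hypothesis ensures measurability so that the density theorem is applicable), choose a continuous compactly supported $\tilde K$ with $\|K - \tilde K\|_1$ arbitrarily small. Replacing $K_\sigma$ by $\tilde K_\sigma$ in the finite sum costs at most $\|K - \tilde K\|_1 \cdot \sum_i |c_i| |Q_i| \leq \|K - \tilde K\|_1 \cdot \|\tilde f\|_1$, which is bounded independently of $\sigma$ and $\delta$. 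Since $\tilde K_\sigma$ is uniformly continuous with compact support, the midpoint-rule error on each cube is uniformly $o(1)$ as $\delta \to 0$; summing over the (finitely many) cubes for which the translate $\tilde K_\sigma(\mathbf{X} - \mathbf{Z}_i)$ is nonzero yields a pointwise error $o(1)$ supported in a fixed enlargement of $[-R,R]^d$, which integrates to $o(1)$ in $\ell_1$. Collecting the four error contributions — tail truncation of $f$, simple-function approximation, replacement of $K$ by $\tilde K$, and midpoint-rule error — drives the total below $\epsilon/2$, and combining with the approximate-identity step yields the desired $\ell_1$ bound on $\|f - \sum_i a_i K_\sigma(\cdot - \mathbf{Z}_i)\|_1$.
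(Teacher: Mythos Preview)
The paper does not prove this theorem at all; it is quoted verbatim from Park and Sandberg and invoked as a black box in the proof of Theorem~\ref{theorem1}. Your sketch is therefore not competing with anything in the paper, but it is a faithful reconstruction of the original Park--Sandberg argument: normalize, use $\{K_\sigma\}$ as an approximate identity to reduce to approximating a convolution, then discretize the convolution integral by a Riemann sum after smoothing $K$ to a $C_c$ function.

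Two small points worth tightening. First, you write that convolution with $K_\sigma$ is a contraction on $\ell_1$ because $\|K_\sigma\|_1 = 1$; this is only true if $K \geq 0$. In general $\|K_\sigma\|_1 = \|K\|_1 \geq |\int K| = 1$ after normalization, so the operator is merely bounded with norm $\|K\|_1$. This is all you need, so the argument survives, but the constant changes. Second, when you pass from $K$ to $\tilde K$ you mention only the cost of the replacement in the finite sum $\sum_i c_i |Q_i| K_\sigma(\cdot - \mathbf{Z}_i)$; to close the triangle inequality you also need the companion estimate $\|\tilde f * K_\sigma - \tilde f * \tilde K_\sigma\|_1 \leq \|\tilde f\|_1 \|K - \tilde K\|_1$ on the convolution side, so that the midpoint rule for $\tilde K_\sigma$ actually connects back to $\tilde f * K_\sigma$. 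This is trivial and you presumably had it in mind under ``replacement of $K$ by $\tilde K$'', but as written the chain has a missing link.
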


Theorem~\ref{theorem:rbf} is the universal approximation theorem for the RBF networks, which is proved by Park and Sandberg~\cite{park1991universal}.
We are now ready to present the proof of Theorem~\ref{theorem1}. 
\begin{proof} (of Theorem~\ref{theorem1})
	From Lemma~\ref{lemma1}, two-layer AdderNets can present the set of functions $g(\mathbf{X})=\sum_i a_i \mbox{ReLU}(\Vert \mathbf{W}_i - \mathbf{X} \Vert_1 + b_i)$, while $g(\mathbf{X})$ is dense in $\ell_1(K)$ according to
	the universal approximation theorem of radio basis function (RBF) networks (theorem~\ref{theorem:rbf}). 
	We conclude that the class of two-layer AdderNets is dense in $\ell_1(K)$, where $K$ is an arbitrary compact set in $\mathbb{R}^d$. 
\end{proof}

Though we show that AdderNets are universal approximators, the expressive power of AdderNets is still unknown, \ie how many hidden units should we use to achieve a required approximation error. Therefore, we present a approximation bound for AdderNets with a single hidden layer.

\begin{theorem}
	Assume that $f$ is Lipschitz: $\exists L_1>0$ such that $| f(\mathbf{X})-f(\mathbf{X}')| \leq L_1 \|\mathbf{X}-\mathbf{X}'\|_1$ for all $\mathbf{X}$ and $\mathbf{X}'$, and assume $\Vert f\Vert_1 = \int \vert f(\mathbf{X}) \vert d\mathbf{X} < \infty$.
	Given any probability measure $\mu$, and define $\|f\|_\mu= \int |f(\mathbf{X})| d \mu(\mathbf{X}) $. There exists a single hidden layer AdderNet $A$ with ReLU activation function of width no more than 
	\[
	\frac{\|f\|_1 \|f\|_\mu}{\min(\epsilon^{d+2} L_1^2 , \epsilon^{d+1} L_1 \|f\|_\mu)}
	\]
	such that
	\[
	\int (f(\mathbf{X})-A(\mathbf{X}))^2 d \mu(\mathbf{X})
	= O(\epsilon^2 L_1^2) ,
	\]
	where $O(\cdot)$ contains a $d$-dependent constant. 
\end{theorem}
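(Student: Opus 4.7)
The plan is to explicitly construct $A$ as a weighted sum of localized ``tent'' functions placed on a regular grid and then count how many terms are needed. By Lemma~\ref{lemma1}, a single hidden layer AdderNet can realize any function of the form $g(\mathbf{X}) = \sum_i a_i \mathrm{ReLU}(\|\mathbf{W}_i - \mathbf{X}\|_1 + b_i)$ with width equal, up to a small constant factor, to the number of summands, so it suffices to build the approximant in this form.

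First I would verify that the $\ell_1$-pyramid $\phi_{\mathbf{c}}(\mathbf{X}) := \max\{0,\, \epsilon - \|\mathbf{X} - \mathbf{c}\|_1\}$ admits a constant-size adder representation: the identity $\max\{0, \epsilon - t\} = \epsilon - t + \mathrm{ReLU}(t - \epsilon)$ combined with $\|\mathbf{X}-\mathbf{c}\|_1 = \mathrm{ReLU}(\|\mathbf{X}-\mathbf{c}\|_1 + 0)$, plus one extra unit supplying the additive constant via a large bias, realizes $\phi_{\mathbf{c}}$ using $O(1)$ adder hidden units. Hence the candidate $g(\mathbf{X}) = \sum_{i \in S} f(\mathbf{c}_i)\, \phi_{\mathbf{c}_i}(\mathbf{X})/\epsilon$ is realizable by a single hidden layer AdderNet of width $O(|S|)$.

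Next I would lay a regular grid $\{\mathbf{c}_i\}$ of spacing $\sim\epsilon$, chosen so that the normalized pyramids $\phi_{\mathbf{c}_i}/\epsilon$ form an (approximate) partition of unity at scale $\epsilon$, and then restrict the sum to $S := \{i : |f(\mathbf{c}_i)| \geq \delta\}$ for a threshold $\delta$ to be fixed later. On the region covered by active tents, the Lipschitz hypothesis yields pointwise error $|g(\mathbf{X}) - f(\mathbf{X})| \lesssim L_1 \epsilon$ and hence $L^2(\mu)$ error $O(L_1^2 \epsilon^2)$. Counting cells via Markov's inequality applied to the Riemann-sum approximation of $\|f\|_1$ gives $|S|\cdot\delta\cdot\epsilon^d \lesssim \|f\|_1$, i.e., $|S| \lesssim \|f\|_1/(\delta\,\epsilon^d)$, which contributes the factor $\|f\|_1/\epsilon^d$ to the eventual width bound.

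Finally, the key step is optimizing $\delta$ against the tail error from omitted cells, where $|f| \leq \delta + L_1 \epsilon$ pointwise by Lipschitz continuity. The tail contribution to $\int (f-g)^2 d\mu$ admits two independent bounds: (a) $(\delta + L_1\epsilon)^2$ using $\mu(\mathbb{R}^d)=1$, and (b) $(\delta + L_1\epsilon)\,\|f\|_\mu$ via the pointwise estimate $(f-g)^2 \leq \|f-g\|_\infty \cdot |f-g|$. Demanding each to be $O(\epsilon^2 L_1^2)$ splits the argument into two regimes: when $L_1\epsilon \leq \|f\|_\mu$, bound (b) forces $\delta = L_1^2\epsilon^2/\|f\|_\mu$, giving width $\|f\|_1\|f\|_\mu/(L_1^2\epsilon^{d+2})$; when $L_1\epsilon > \|f\|_\mu$, bound (a) permits the larger choice $\delta = L_1\epsilon$, giving width $\|f\|_1/(L_1\epsilon^{d+1})$. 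Rewriting both expressions with the common numerator $\|f\|_1\|f\|_\mu$ yields precisely the claimed $\min$ expression. The principal technical obstacle I anticipate is making the $\ell_1$-pyramid partition of unity quantitative: since $\ell_1$ balls do not tile $\mathbb{R}^d$, I would verify on a shifted sub-lattice that $\sum_i \phi_{\mathbf{c}_i}(\mathbf{X})/\epsilon$ is pinched between two positive $d$-dependent constants and absorb the resulting ratio into the implicit constant of the $O(\epsilon^2 L_1^2)$ error bound.
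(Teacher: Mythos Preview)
Your approach differs genuinely from the paper's. The paper argues probabilistically in the style of Maurey--Barron: it convolves $f$ with the scaled tent kernel $r_\epsilon(\|\cdot\|_1)$ to obtain a smoothed $\psi_\epsilon$ with $|f-\psi_\epsilon|=O(L_1\epsilon)$, then writes $\psi_\epsilon$ as an expectation over random centers $\mathbf{Z}_i$ drawn from the importance density $q=|f|/\|f\|_1$ with coefficients $a_i=\mathrm{sgn}(f(\mathbf{Z}_i))\|f\|_1 c_0^{-1}$, and bounds the Monte Carlo variance of the resulting $N$-term estimator $\phi_N$. This yields $\int(f-\phi_N)^2\,d\mu = O\bigl(N^{-1}\epsilon^{-d}\|f\|_1(\|f\|_\mu+L_1\epsilon)+L_1^2\epsilon^2\bigr)$, and the width bound follows by solving for $N$. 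Because the centers are sampled rather than gridded, no partition-of-unity argument is needed; your deterministic grid-and-threshold construction is more explicit but must face the tiling issue you flag.

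Your case split, however, is backwards and this creates a real gap. In the regime $L_1\epsilon\le\|f\|_\mu$ you invoke bound (b), but $(\delta+L_1\epsilon)\|f\|_\mu\ge L_1\epsilon\,\|f\|_\mu$ already at $\delta=0$, and $\|f\|_\mu/(L_1\epsilon)$ is not a $d$-only constant, so the tail cannot be forced to $O(L_1^2\epsilon^2)$ this way. Worse, your choice $\delta=L_1^2\epsilon^2/\|f\|_\mu\le L_1\epsilon$ is too small for the Markov counting step: the Lipschitz correction in the Riemann-sum comparison subtracts a term of order $L_1\epsilon$ from $\delta$, and when $\delta\ll L_1\epsilon$ the inequality $|S|\,\delta\,\epsilon^d\lesssim\|f\|_1$ becomes vacuous. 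The fix is simply to swap the two cases: use bound (a) with $\delta\sim L_1\epsilon$ when $L_1\epsilon\le\|f\|_\mu$, and bound (b) with $\delta\sim L_1^2\epsilon^2/\|f\|_\mu\ge L_1\epsilon$ when $L_1\epsilon>\|f\|_\mu$. With the swap both the tail bound and the counting go through, and the resulting widths are in each regime no larger than the theorem's stated bound, so the theorem follows.
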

\begin{proof}
	By lemma~\ref{lemma1}, we can use $g(\mathbf{X})$ to approximate $f$, where $g(\mathbf{X})$ contains:
	\begin{equation}
	\phi_N(\mathbf{X}) = \frac{1}{N} \sum_{i=1}^n a_i r_{\epsilon}(\Vert \mathbf{X} - \mathbf{W}_i \Vert),\label{eq:phi-N}
	\end{equation}
	where $r_{\epsilon}(x) = \frac{r(\frac{x}{\epsilon})}{\epsilon^{d}}$ and $r(x)= \max(0,x+1) + \max(0,x-1)-2\max(0,x)$.
	
	We want to use $\phi_N(\mathbf{X})$ to approximate $f$. To do so, we first approximate $f$ by continuous approximation below:
	\begin{equation}
	\psi_\epsilon(\mathbf{X}) = \int f(\mathbf{Z})r_\epsilon(\Vert \mathbf{X} - \mathbf{Z}\Vert_1) c_0^{-1}d\mathbf{X},
	\end{equation}
	where $c_0 =\int g(\Vert \mathbf{X}\Vert_1)d\mathbf{X}$.
	
	Since $f$ is Lipschitz, we have
	\begin{equation}
	\begin{aligned}
	|f(\mathbf{X}) -\psi_\epsilon(\mathbf{X})| &\leq \int |f(\mathbf{X})-f(\mathbf{X}-\epsilon\mathbf{Z})|r(\Vert \mathbf{Z}\Vert_1) c_0^{-1}d\mathbf{Z}\\
	&\leq \epsilon L_1 c_0^{-1}c_1,
	\end{aligned}
	\end{equation}
	where $c_1 = \int \Vert\mathbf{X}\Vert_1 r(\Vert\mathbf{X}\Vert_1) d\mathbf{X}$

	Let $q(x) =\frac{ \vert f(\mathbf{X})\vert}{\Vert f \Vert_1} $. We draw $\mathbf{Z}_1,\dots,\mathbf{Z}_N \sim q(\mathbf{Z})$, and set $a_i = \mbox{sgn} (f(\mathbf{Z}_i)) \Vert f\Vert_1 c_0^{-1}$ in \eqref{eq:phi-N}.  This implies
	\begin{equation}
	E_{\mathbf{Z}_1,\dots,\mathbf{Z}_n}
	\phi_N(\mathbf{X})
	=\psi_\epsilon(\mathbf{X}).
	\label{eq:phi-N-exp}
	\end{equation}
	It follows that
	\begin{equation}
	\begin{aligned}
	& E_{\mathbf{Z}_1,\dots,\mathbf{Z}_n} (\psi_\epsilon(\mathbf{X})-\phi_N(\mathbf{X}))^2 \\\leq&  \frac{\Vert f\Vert_1^2}{N} E_{\mathbf{Z}_1\sim q(\mathbf{Z})} c_0^{-2} r_\epsilon(\Vert \mathbf{X}-\mathbf{Z}_1\Vert_1)^2\\
	=&  \frac{\Vert f\Vert_1}{N} \int |f(\mathbf{Z})| c_0^{-2} r_\epsilon(\Vert \mathbf{X}-\mathbf{Z}\Vert_1)^2 d\mathbf{Z}\\
	= &\frac{\Vert f\Vert_1}{N\epsilon^dc_0^2} \int |f(\mathbf{X}-\epsilon\mathbf{Z})| c_0^{-2} r(\Vert \mathbf{Z}\Vert_1)^2 d\mathbf{Z}  \\
	\leq &\frac{\Vert f\Vert_1}{N\epsilon^dc_0^2} \int(|f(\mathbf{X})| +G \epsilon\|\mathbf{Z}\|_1) c_0^{-2} r(\Vert \mathbf{Z}\Vert_1)^2 d\mathbf{Z}\\ =&  O (N^{-1} \epsilon^{-d}\|f\|_1 [|f(\mathbf{X})|+ L_1\epsilon] ) ,
	\end{aligned}
	\end{equation}
	where $O(\cdot)$ a $d$-dependent constant.
	From \eqref{eq:phi-N-exp}, we have:
	\begin{equation}
	\begin{aligned}
	&E (\phi_N(\mathbf{X})-f(\mathbf{X}))^2\\
	=&
	E(\phi_N(\mathbf{X})-\psi_\epsilon(\mathbf{X}))^2 + E(\psi_\epsilon(\mathbf{X})-f(\mathbf{X}))^2 \\
	=& 
	O (N^{-1} \epsilon^{-d}\|f\|_1 [|f(\mathbf{X})|+ L_1\epsilon] + \epsilon^2 L_1^2),
	\end{aligned}
	\end{equation}
	where $O(\cdot)$ contains a $d$-dependent constant.
	Taking expectation over $\mu$, we obtain
	\begin{equation}
	\begin{aligned}
	&	E \int (\phi_N(\mathbf{X})-f(\mathbf{X}))^2 d \mu(\mathbf{X})\\
	=& O (N^{-1} \epsilon^{-d}\|f\|_1 [\|f(\mathbf{X})\|_\mu+ L_1\epsilon] + \epsilon^2 L_1^2) .
	\end{aligned}
	\end{equation}
	We may pick optimal $N$ with $\|f\|_1 N^{-1} \epsilon^{-d}= c \min(\epsilon^{2}\|f\|_\mu^{-1} L_1^2 , \epsilon L_1)$, for a constant $c$. Note that with this choice, the desired bound holds for the expectation over the choices of $z_1,\ldots,z_N$ of \eqref{eq:phi-N}. Therefore there exists a choice of $z_1,\ldots,z_N$ such that the desired bound holds, which implies the theorem.
\end{proof}

This theorem shows that two-layer AdderNet achieve the error less than $\epsilon$ by using $O(\epsilon^{-1})$ hidden units (choosing $d=0$ or $d=-1$). In other words, with $n$ hidden units, AdderNets achieve a integrated squared error of order $O(1/n)$. This result is same as the bound in traditional networks~\cite{barron1993universal}, indicating that the two different networks have similar expressive power.

Next, we begin to the universal approximation theorem for deep AdderNets with a bounded width, which is similar with the important results in traditional networks~\cite{lu2017expressive}. We first present a lemma to represent a set of fully-connect layers by two-layer AdderNets.

\begin{lemma}
	\label{lemma2}
	A fully-connected layer $C_1$ with $m$ outputs whose weights are same or zero along each dimension:
	\begin{equation}
	C_1: \mathbb{R}^d \rightarrow \mathbb{R}^m : C(\mathbf{X})_i = \mathbf{A}_{i} \sum_{j=1}^d \mathbf{B}_{ij}  \mathbf{X}_j,
	\end{equation}
	where $A\in \mathbb{R}^m$, $\mathbf{B}_{ij}$ is 0 or 1, and $\mathbf{X} \in \mathbb{R}^d$, can be represented by two-layer AdderNets with $2m+2$ hidden units.
\end{lemma}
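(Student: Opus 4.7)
My plan is to exhibit an explicit two-layer AdderNet with exactly $2m+2$ hidden units that realizes $C_1$. Writing $S_i=\{j:B_{ij}=1\}$, the target output is $y_i=A_i\sum_{j\in S_i}\mathbf{X}_j$. The basic device throughout is to linearize $|\cdot|$ by forcing its argument into a sign-definite range: pick $M$ larger than $\max_{j,\,\mathbf{X}\in K}|\mathbf{X}_j|$ and a second constant $M'$ larger than every first-layer preactivation. Then $|{-M}-\mathbf{X}_j|=M+\mathbf{X}_j$ and $|M-\mathbf{X}_j|=M-\mathbf{X}_j$, so first-layer weights in $\{-M,+M\}$ turn $\|\mathbf{W}-\mathbf{X}\|_1$ into an affine function of $\mathbf{X}$; the same trick is reused in the second layer with weights in $\{-M',+M'\}$.

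I would organize the $2m+2$ hidden units as $m+1$ \emph{twin pairs}. The first pair $H_0,H_0'$ is produced by setting all first-layer weights equal to $-M$, $a=1$, $b=0$, so that $H_0=H_0'=dM+\sum_j\mathbf{X}_j$. For each $i\in\{1,\dots,m\}$, the pair $H_i,H_i'$ uses weights $-M$ on coordinates $j\in S_i$ and $+M$ on coordinates $j\notin S_i$, yielding $H_i=H_i'=dM+\sum_{j\in S_i}\mathbf{X}_j-\sum_{j\notin S_i}\mathbf{X}_j$. A direct computation then gives $H_0+H_i=2\sum_{j\in S_i}\mathbf{X}_j+2dM$, which recovers the target linear combination up to a known constant. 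Because $M$ dominates every coordinate of $\mathbf{X}\in K$, all hidden preactivations are strictly positive and the intermediate ReLU acts as the identity.

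For the second adder layer, choosing $w'_{i,k}\in\{-M',M'\}$ makes $|w'_{i,k}-H_k|$ equal to $H_k+M'$ or $M'-H_k$, since $0<H_k<M'$. For output $i$, I set $w'_{i,k}=-M'$ on both twins of the $0$-pair and of the $i$-pair, whose joint contribution is $2H_0+2H_i+4M'$, and for every other pair $k\in\{1,\dots,m\}\setminus\{i\}$ I give the two twins the opposite weights $-M'$ and $+M'$, so that the contribution $(H_k+M')+(M'-H_k')=2M'$ is a constant (here the equality $H_k=H_k'$ of twins is essential). Taking $a'_i=A_i/4$ and absorbing the resulting constant $A_idM+(A_i/2)(m+1)M'$ into $b'_i$ then produces $y_i=A_i\sum_{j\in S_i}\mathbf{X}_j$, as required.

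The main obstacle is the rigidity of the adder-layer output: each hidden unit contributes $\pm a'_i$ times its value to every output, and this contribution can never be made zero on its own, so one cannot simply ignore hidden units irrelevant to a given output. The twin-pair construction is what neutralizes this rigidity by letting irrelevant pairs cancel identically, and the arithmetic $2(m+1)=2m+2$ is what pins down the stated width bound.
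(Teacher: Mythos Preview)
Your proposal is correct and follows essentially the same strategy as the paper: both linearize the adder units by choosing first- and second-layer weights of large magnitude $\pm M$ (resp.\ $\pm M'$) on a compact domain, and both organize the $2m+2$ hidden units into $m+1$ pairs so that pairs irrelevant to a given output cancel in the second layer. The only cosmetic difference is the pairing scheme: the paper pairs each hidden unit with its \emph{negation} and cancels irrelevant pairs by giving both twins the \emph{same} second-layer sign, whereas you pair each unit with an \emph{identical copy} and cancel by giving the twins \emph{opposite} signs; the resulting arithmetic $H_0+H_i=2\sum_{j\in S_i}\mathbf X_j+\text{const}$ is the same in both constructions.
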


\begin{proof}
	
	We first show that an adder layer can be constructed to a fully-connected layers $C_2$ with only $+a$ or $-a$ weights in each dimension whose weights are same along each dimension:
	\begin{equation}
	\small
	\begin{aligned}
	L_A(\mathbf{X})_i & =  a_i  \sum_{j} \vert \mathbf{W}_{ij} - \mathbf{X}_j  \vert - \Vert \mathbf W_{i}\Vert_1 + b_i\\
	& = a_i  \sum_{j} \mbox{sgn} (\mathbf{W}_{ij}) (\mathbf{W}_{ij} - \mathbf{X}_j ) - \sum_{j} \mbox{sgn} (\mathbf{W}_{ij})  \mathbf W_{ij} + b_i\\
	& = a_i  \sum_{j} \mbox{sgn} (-\mathbf{W}_{ij}) \mathbf{X}_j + b_i,
	\end{aligned}
	\end{equation}
	where we set $\mathbf{W}_{ij} > \max_k \vert\mathbf{X}_k\vert$ (since the function is defined in a compact set, $\mathbf{X}$ in each layer is bounded).
	
	Now we begin to construct two-layer AdderNet to $C_1$. For the fist layer $L_1$, we set the $i$-th in $L_1$ as: $\mbox{sgn} (-\mathbf{W}^1_{ij})$ is 1 or -1 if $\mathbf{B}_{ij}$ is $1$ or 0. The $2i$-th neuron as: $\mbox{sgn} (-\mathbf{W}^1_{(2i)j})$ is -1 or 1 if $\mathbf{B}_{ij}$ is $1$ or 0. Then the weight of the second last and last neuron ($2m+1$ and $2m+2$) are all $1$ and $-1$, respectively. For the second layer, the $i$-th neuron is set as: $\mbox{sgn} (-\mathbf{W}^2_{ij})=-1$ for the $j=2i$ and $j=2m+2$ and 1 for others. Thus, the output of $j$-th neuron in layer two is:
	\begin{equation}
	\small
	\begin{aligned}
	L_2(L_1(\mathbf{X}))_i =& a_i \sum_j \mbox{sgn} (-\mathbf{W}^2_{ij}) L_1(\mathbf{X})_j \\
	=& a_i ( L_1(\mathbf{X})_i -L_1(\mathbf{X})_{2i} + L_1(\mathbf{X})_{2m+1} \\
	&-L_1(\mathbf{X})_{2m+2} )\\
	=& 2a_i (\sum_j \mathbf{1}_{\mathbf{B}_{ij}>0} \mathbf{X}_j -  \sum_j \mathbf{1}_{\mathbf{B}_{ij}=0} \mathbf{X}_j + \sum_j  \mathbf{X}_j )\\
	=& 4a_i \sum_{j=1} \mathbf{B}_{ij}  \mathbf{X}_j.
	\end{aligned}    
	\end{equation}
	Note that the ReLU activation between two layer can be easily ignored by selecting large enough bias for the first layer and then the bias can be cut back by the weights in the second layer.
\end{proof}

We are now ready to proof universal approximation of width-bounded AdderNet.
\begin{theorem}[Universal Approximation of Width-Bounded AdderNet]
	\label{theorem:width}
	For any $d \in \mathbb{N}$, the family of width-bounded AdderNet with ReLU as activation function can universally approximate any $f\in \ell_1(K)$, where $K$ is a compact set of $\mathbb{R}^d$. More precisely, for any $\epsilon > 0$, there is a deep AdderNet $A$ with maximum width $w \leq2(d + 5)$ such that:
	\begin{equation}
	\int_K |f(\mathbf{X}) - A(\mathbf{X})| d\mathbf{X} \leq \epsilon ,
	\end{equation}
	where $\mathbf{X}\in K$.
\end{theorem}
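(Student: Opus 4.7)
The plan is to reduce the width-bounded universal approximation for AdderNets to the known width-bounded universal approximation for ReLU networks, and then simulate each fully-connected layer of the ReLU network by the two-layer AdderNet gadget provided by Lemma~\ref{lemma2}. Specifically, I would invoke a result in the style of Lu \emph{et al.}'s width-$(d+4)$ universal approximation theorem: for any $f\in\ell_1(K)$ on a compact $K\subset\mathbb{R}^d$ and any $\epsilon>0$, there exists a deep feedforward ReLU network $N$ of maximum width $w_0\leq d+4$ and finite depth with $\int_K |f(\mathbf{X})-N(\mathbf{X})|\,d\mathbf{X}\leq\epsilon/2$. Then I would replace $N$ by a deep AdderNet $A$ that simulates $N$ layer by layer, bounding the overall error by another $\epsilon/2$ via continuity of composition.

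The core step is the simulation. The ReLU-network construction used in the width-bounded universal approximation theorem proceeds by sweeping over a grid of cells and maintaining a ``running record'' channel together with the raw input coordinates. Each of its fully-connected layers has precisely the restricted form covered by Lemma~\ref{lemma2}: each output is either a copy/gate of a selected subset of input coordinates multiplied by a common scalar, or a simple linear update of the accumulator. For such a layer with $m\leq w_0$ outputs, Lemma~\ref{lemma2} supplies a two-layer AdderNet of width $2m+2$ that realises it exactly (the ReLU between the two adder layers being absorbed by the bias trick noted in the lemma's proof). Stacking these gadgets and inserting the outer ReLU of $N$ at the gadget boundaries produces an AdderNet whose layer-wise width is at most $2w_0+2\leq 2(d+4)+2=2(d+5)$, matching the claimed bound exactly.

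The main obstacle is verifying that the ReLU construction one cites actually has every fully-connected layer in the restricted 0/1-pattern form required by Lemma~\ref{lemma2}. If a cited construction uses arbitrary real weights in some layer, I would either (i) replace that layer by an equivalent sequence of restricted layers, paying a constant depth penalty but no width penalty, or (ii) re-derive the width-$(d+4)$ universal approximator directly in a form where each weight matrix is the entry-wise product of a row-scaling vector and a 0/1 mask, which suffices for the finite-volume grid-and-accumulate argument. A second, more routine obstacle is error propagation: since the approximation is only in $L^1$ on a compact $K$, I need to ensure that intermediate activations stay bounded so that the Lipschitz composition bound gives the final $\epsilon$; this follows because $K$ is compact, each adder layer is $1$-Lipschitz in its input up to the scalar $a_i$, and the batch-normalisation-style scale factors are fixed once the network is constructed.
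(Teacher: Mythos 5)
Your proposal follows essentially the same route as the paper's proof: invoke Lu~\emph{et al.}'s width-$(d+4)$ ReLU universal approximation theorem and replace each of its fully-connected layers by the two-adder-layer gadget of Lemma~\ref{lemma2}, giving maximum width $2(d+4)+2=2(d+5)$. You are in fact somewhat more careful than the paper, which simply asserts (without verification) that the Lu~\emph{et al.} construction uses only layers with the restricted same-or-zero weight pattern required by Lemma~\ref{lemma2} --- precisely the obstacle you identify and propose to handle.
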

\begin{proof}
	According to Lu~\etal~\cite{lu2017expressive}, any Lebesgue-integrable function can be approximated by a fully-connected ReLU network with width $w \leq d + 4$, which is proved by construction. We can find that the construction use fully-connected layers whose weights are same or zero along each dimension. According to lemma~\ref{lemma2}, we can use two adder layers to replace each fully-connected layers this construction, and the width for adder layers is less than  $2(d+5)$. 
\end{proof}

This theorem can be regarded as a dual version of Theorem 1, which proves that the two-layer AdderNets (\ie depth-bounded AdderNets) are universal approximator. In fact, the width-bounded traditional neural networks have also been proved to be universal approximators~\cite{lu2017expressive} and the bound of width is $d+4$. Our bound for AdderNets ($2(d+5)$) has the same magnitude with that of traditional networks, \ie $O(d)$, which implies that the AdderNets has similar approximation capacity with traditional networks.

\section{Optimization}

In the above section, we present that AdderNets are universal approximator, which provide a strong theoretical guarantee. We then turn to analyze how to train AdderNets to achieve good performance.

\subsection{Gradient in AdderNets}
Neural networks utilize back-propagation to compute the gradients of filters and stochastic gradient descent to update the parameters. In CNNs, the partial derivative of output features $Y$ with respect to the filters $F$ is calculated as:
\begin{equation}
\frac{\partial Y(m,n,t)}{\partial F(i,j,k,t)} = X(m+i,n+j,k), 
\end{equation} 
where $i\in [m,m+d]$ and $j \in [n,n+d]$. To achieve a better update of the parameters, it is necessary to derive informative gradients for SGD. In AdderNets, the partial derivative of $Y$ with respect to the filters $F$ is:
\begin{equation}
\frac{\partial Y(m,n,t)}{\partial F(i,j,k,t)} = \mbox{sgn} (X(m+i,n+j,k) - F(i,j,k,t)), \label{fcn:l1bp}
\end{equation}  
where $\mbox{sgn}(\cdot)$ denotes the sign function and the value of the gradient can only take +1, 0, or -1.

Considering the derivative of $\ell_2$-norm 
\begin{equation}
\frac{\partial Y(m,n,t)}{\partial F(i,j,k,t)} =  X(m+i,n+j,k) - F(i,j,k,t), \label{fcn:l2bp}
\end{equation} 
Eq. (\ref{fcn:l1bp}) can therefore lead to a signSGD~\cite{bernstein2018signsgd} update of $\ell_2$-norm. However, signSGD almost never takes the direction of steepest descent and the direction only gets worse as dimensionality grows~\cite{bernstein2018convergence}. It is unsuitable to optimize the neural networks of a huge number of parameters using signSGD. Therefore, we propose using Eq. (\ref{fcn:l2bp}) to update the gradients in our AdderNets.

\subsection{$\ell_2$ to $\ell_1$ Training}

It is common in deep learning to lean a ``weak'' model with the help of a ``strong'' model. For example, to train a low bit-width nerual network~\cite{zhang2021towards,yang2019quantization}. Motivated by the full-precision gradient, we further introduce the $\ell_2$-AdderNets, which calculate $\ell_2$ distance between the filter and the input feature, the filters in $\ell_2$-AdderNets can be reformulated as 
\begin{equation}
\small
Y(m,n,t) = -\sum_{i=0}^{d}\sum_{j=0}^{d}\sum_{k=0}^{c_{in}} \big[ X(m+i,n+j,k) - F(i,j,k,t)\big]^2.
\label{fcn:l2}
\end{equation} 

In fact, the output of the  $\ell_2$-AdderNets can be calculated as 
\begin{equation}
\small
\begin{aligned}
Y_{\ell_2}(m,n,t) =& -\sum_{i=0}^{d}\sum_{j=0}^{d}\sum_{k=0}^{c_{in}} \big[ X(m+i,n+j,k) - F(i,j,k,t)\big]^2\\
=&\sum_{i=0}^{d}\sum_{j=0}^{d}\sum_{k=0}^{c_{in}} \big[ 2  X(m+i,n+j,k)\times F(i,j,k,t)\\
&- X(m+i,n+j,k)^2 - F(i,j,k,t)^2\big]\\
=& 2Y_{CNN}(m,n,t)- \sum_{i=0}^{d}\sum_{j=0}^{d}\sum_{k=0}^{c_{in}} \big[ X(m+i,n+j,k)^2\\
& + F(i,j,k,t)^2\big].
\end{aligned}
\label{fcn:equ}
\end{equation} 
$\sum_{i=0}^{d}\sum_{j=0}^{d}\sum_{k=0}^{c_{in}}F(i,j,k,t)^2$ is same for each channel (\ie each fixed $t$). $\sum_{i=0}^{d}\sum_{j=0}^{d}\sum_{k=0}^{c_{in}} X(m+i,n+j,k)^2$ is the $\ell_2$-norm of each input patch. If this term is same for each patch, the output of $\ell_2$-AdderNet can be roughly seen as a linear transformation of the output of CNN. Although the $\ell_2$-AdderNet can achieve comparable performance with CNNs, its calculation contain square operations, which introduce multiplications and would bring large energy consumption compared with the $\ell_1$-AdderNet.

To this end, we propose an $\ell_2$ to $\ell_1$ training strategy to utilize the ability of $\ell_2$ norm to guide the training of $\ell_1$-AdderNet. We introduce the $\ell_p$-AdderNets ($1\leq p \leq 2$), which can be formulated as:
\begin{equation}
 \small
 Y(m,n,t) = -\sum_{i=0}^{d}\sum_{j=0}^{d}\sum_{k=0}^{c_{in}} \vert X(m+i,n+j,k) - F(i,j,k,t)\vert^p.
 \label{fcn:lp}
\end{equation}
Since it is difficult to directly training the $\ell_1$-AdderNet, we train the $\ell_2$-AdderNets at the beginning of training. During the training procedure, $p$ is linearly reduced from 2 to 1. Therefore, the $\ell_p$-AdderNet becomes $\ell_1$-AdderNet at the end of training. Since it is easy to find that the Proposition 2 is also right for $\ell_p$-AdderNet when $1<p<2$, the partial derivative of output features $Y$ in $\ell_p$-AdderNet with respect to the filters $F$ is calculated by Equ. (\ref{fcn:l2bp}). 

Note that we do not use the full precision gradient like Equ. (\ref{fcn:l2bp}) since the derivative of $X$ would influence the gradient in not only current layer but also layers before the current layer according to the gradient chain rule, and the change of gradients will make the training unstable. The partial derivative of output features $Y$ in $\ell_p$-AdderNet with respect to the input features $X$ is calculated as:
\begin{equation}
\frac{\partial Y(m,n,t)}{\partial X(i,j,k,t)} = \big[ F(i,j,k,t)-X(m+i,n+j,k)\big]^{p-1}.
\label{fcn:lpbp}
\end{equation}

\subsection{Adaptive Learning Rate Scaling}\label{sec:2.3}

In conventional CNNs, assuming that the weights and the input features are independent and identically distributed following normal distribution, the variance of the output can be roughly estimated as:
\begin{equation}
\begin{aligned}
Var[Y_{CNN}]& =  \sum_{i=0}^{d}\sum_{j=0}^{d}\sum_{k=0}^{c_{in}} Var[X\times F] \\
&= d^2c_{in} Var[X]Var[F].
\end{aligned}
\label{varcnn}
\end{equation} 
If variance of the weight is $Var[F]= \frac{1}{d^2c_{in}}$, the variance of output would be consistent with that of the input, which will be beneficial for the information flow in the neural network. In contrast, for AdderNets, the variance of the output can be approximated as:
\begin{equation}
\begin{aligned}
Var[Y_{AdderNet}] &= 
\sum_{i=0}^{d}\sum_{j=0}^{d}\sum_{k=0}^{c_{in}} Var[| X -F|] \\
&= \sqrt{\frac{\pi}{2}}d^2c_{in} (Var[X]+Var[F]),
\end{aligned}
\label{varadd}
\end{equation} 
when $F$ and $X$ follow normal distributions. In practice, the variance of weights $Var[F]$ is usually very small~\cite{glorot2010understanding}, \eg $10^{-3}$ or $10^{-4}$ in an ordinary CNN. Hence, compared with multiplying $ Var[X]$ with a small value in Eq. (\ref{varcnn}), the addition operation in Eq. (\ref{varadd}) tends to bring in a much larger variance of outputs in AdderNets. 

We next proceed to show the influence of this larger variance of outputs on the update of AdderNets. To promote the effectiveness of activation functions, we introduce batch normalization after each adder layer. Given input $x$ over a mini-batch $\mathcal{B} = \left\{x_{1}, \cdots, x_{m}\right\}$, the batch normalization layer can be denoted as:
\begin{equation}
y = \gamma \frac{x-\mu_{\mathcal{B}}}{\sigma_{\mathcal{B}}} + \beta,
\end{equation} 
where $\gamma$ and $\beta$ are parameters to be learned, and $\mu_{\mathcal{B}}= \frac{1}{m}\sum_i x_i$ and $\sigma^2_{\mathcal{B}}= \frac{1}{m}\sum_i (x_i-\mu_{\mathcal{B}})^2$ are the mean and variance over the mini-batch, respectively. The gradient of loss $\ell$ with respect to $x$ is then calculated as:
\begin{equation}
\small
\frac{\partial \ell}{\partial x_i} = \sum_{j=1}^{m} \frac{\gamma}{m^2 \sigma_{\mathcal{B}}} \left\{ \frac{\partial \ell}{\partial y_i} - \frac{\partial \ell}{\partial y_j}[ 1 +  \frac{(x_i-x_j)(x_j- \mu_{\mathcal{B}})}{\sigma_{\mathcal{B}} }] \right\}.
\label{fcn:bn}
\end{equation} 
Given a much larger variance $Var[Y] = \sigma_{\mathcal{B}}$ in  Eq. (\ref{varadd}), the magnitude of the gradient w.r.t $X$ in AdderNets would be much smaller than that in CNNs according to Eq. (\ref{fcn:bn}), and then the magnitude of the gradient w.r.t the filters in AdderNets would be decreased as a result of gradient chain rule.

\begin{algorithm}[t]
	\caption{The feed forward and back propagation of adder neural networks.} 
	\label{alg1} 
	\begin{algorithmic}[1] 
		\REQUIRE 
		An initialized $\ell_p$-adder network $\mathcal{N}$ and its training set $\mathcal{X}$ and the corresponding labels $\mathcal{Y}$, the global learning rate $\gamma$, $p=2$ and the hyper-parameter $\eta$. 
		\REPEAT
		\STATE Randomly select a batch $\{(\mbox{x},\mbox{y})\}$ from $\mathcal{X}$ and $\mathcal{Y}$;
		\STATE Employ the $\ell_p$-AdderNet $\mathcal{N}$ on the mini-batch: $\mbox{x} \rightarrow \mathcal{N}(\mbox{x})$;
		\STATE Calculate the derivative $\frac{\partial Y}{\partial F}$ and $\frac{\partial Y}{\partial X}$ for adder filters using Eq. (\ref{fcn:l2bp}) and Eq. (\ref{fcn:lpbp});
		\STATE Exploit the chain rule to generate the gradient of parameters in $\mathcal{N}$;
		\STATE Calculate the adaptive learning rate $\alpha_l$ for each adder layer according to Eq. (\ref{fcn:lr2}).
		\STATE Update the parameters in $\mathcal{N}$ using stochastic gradient descent.
		\STATE Decrease $p$ linearly if $p\geq1$.		
		\UNTIL convergence
		\ENSURE A well-trained $\ell_1$-adder network $\mathcal{N}$ with almost no multiplications.
	\end{algorithmic} 
\end{algorithm}

\begin{table}[h]
	\centering
	\caption{The $\ell_2$-norm of gradient of weight in each layer using different networks at 1st iteration.}
	\begin{tabular}{|c|c|c|c|}
		\hline
		\textbf{Model} & \textbf{Layer 1} & \textbf{Layer 2} & \textbf{Layer 3} \\
		\hline	
		\hline	
		AdderNet & 0.0009 & 0.0012 & 0.0146 \\
		\hline	
		CNN & 0.2261 & 0.2990 & 0.4646 \\
		\hline
	\end{tabular}
	\label{tab:weight}
\end{table}

Table~\ref{tab:weight} reports the $\ell_2$-norm of gradients of filters $\Vert F \Vert _2$ in LeNet-5-BN using CNNs and AdderNets on the MNIST dataset during the 1st iteration. LeNet-5-BN denotes the LeNet-5~\cite{lenet} adding an batch normalization layer after each convolutional layer. As shown in this table, the norms of gradients of filters in AdderNets are much smaller than that in CNNs, which could slow down the update of filters in AdderNets. 

A straightforward idea is to directly adopt a larger learning rate for filters in AdderNets. However, it is worth noticing that the norm of gradient differs much in different layers of AdderNets as shown in Table~\ref{tab:weight}, which requests special consideration of filters in different layers. To this end, we propose an adaptive learning rate for different layers in AdderNets. Specifically, the update for each adder layer $l$ is calculated by:
\begin{equation}
\Delta F_l = \gamma \times \alpha_l \times \Delta L(F_l),
\label{fcn:lr1}
\end{equation}
where $\gamma$ is a global learning rate of the whole neural network (\eg for adder and BN layers), $\Delta L(F_l)$ is the gradient of the filter in layer $l$ and $\alpha_l$ is its corresponding local learning rate. As filters in AdderNets act subtraction with the inputs, the magnitude of filters and inputs are better to be similar to extract meaningful information from inputs. Because of the batch normalization layer, the magnitudes of inputs in different layers have been normalized, which then suggests a normalization for the magnitudes of filters in different layers. The local learning rate can therefore be defined as:
\begin{equation}
\alpha_l = \frac{\eta\sqrt{k}}{\Vert \Delta L(F_l)\Vert_2},
\label{fcn:lr2}
\end{equation}
where $k$ denotes the number of elements in $F_l$, and  $\eta$ is a hyper-parameter to control the learning rate of adder filters. By using the proposed adaptive learning rate scaling, the adder filters in different layers can be updated with nearly the same step. The training procedure of the proposed AdderNet is summarized in Algorithm~\ref{alg1}.

\section{Experiment}\label{sec:experi}

In this section, we first conduct toy experiments to show the approximation capacity of AdderNets. We further implement experiments to validate the effectiveness of the proposed AdderNets on several benchmark datasets, including MNIST, CIFAR and ImageNet. Ablation study and visualization of features are provided to further investigate the proposed method. The experiments are conducted on NVIDIA Tesla V100 GPU in PyTorch.

\subsection{Toy Experiments of Approximation Capacity}

In the above subsections, we have proved that a two-layer adder neural network with a single hidden layer can be regarded as a universal approximator. Here we will further verify  the efficiency of AdderNet based universal approximation using some classical toy  classification datasets.

\textbf{Comparisions of AdderNets and traditional neural networks.} To achieve a fair comparison, we initialize a two-layer AdderNet and a two-layer feedforward neural networks with $n$ hidden units. These networks are optimized using SGD with Nesterov's Accelerated Gradient (NAG) . Weight decay and momentum are set as $5\times 10^{-4}$ and 0.9, respectively. Then, we train the two networks for 10,000 iterations using cosine learning rate schedule with an initial learning rate of 0.1. In addition, we use the binary cross entropy loss with sigmoid function for the binary classification task. For classification, the output is classified by whether it is larger than 0.5.

\textbf{Unit Ball.} The training set consists of random  samples $\{(x_i, y_i)\}$ generated from a two-dimensional normal distribution with mean of 0 and variance of 10. The label $z_i$ for the input sample $(x_i,y_i)$ is 
\begin{equation}
z_i=\left\{
\begin{aligned}
1 & , & \sqrt{x_i^2+y_i^2} < 10 , \\
0 & , & \sqrt{x_i^2+y_i^2} > 15.
\end{aligned}
\right.
\end{equation}
where a margin is created between positive and negative samples to make classification easier.

\begin{figure}[h]
	\centering
	\begin{tabular}{cccc}
		Training Data  &  $n=1$  &  $n=2$ &  $n=3$ \\
		\includegraphics[width=0.2\linewidth]{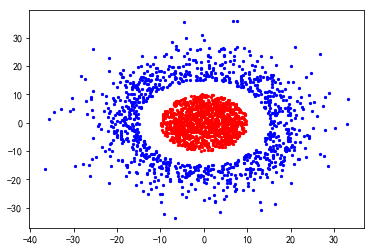}&  \includegraphics[width=0.2\linewidth]{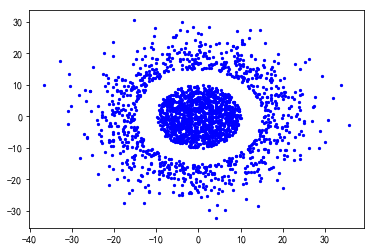} &  \includegraphics[width=0.2\linewidth]{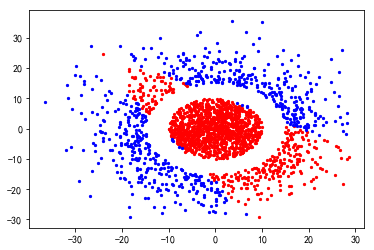}& \includegraphics[width=0.2\linewidth]{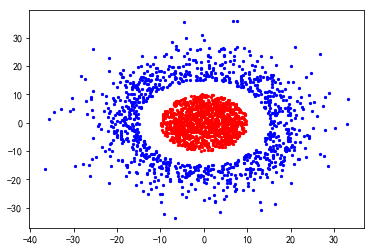}\\
		&   \includegraphics[width=0.2\linewidth]{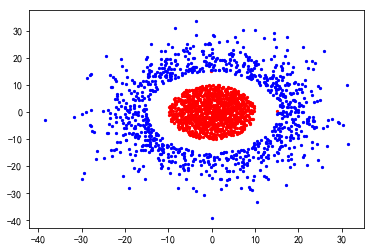} &  \includegraphics[width=0.2\linewidth]{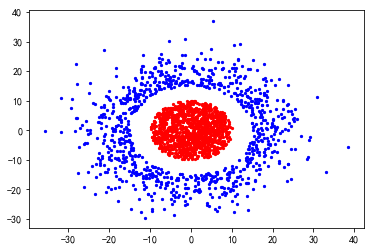}& \includegraphics[width=0.2\linewidth]{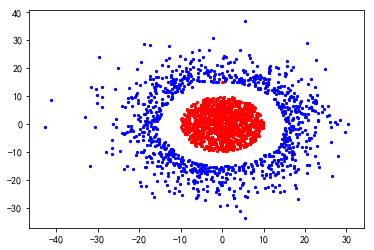}
	\end{tabular}
	\caption{Decision boundaries of classifying unit balls obtained by training fully connected networks (top row) and AdderNets (bottom row) with different number of hidden units. }
	\label{fig:ball}
\end{figure}

Figure~\ref{fig:ball} shows the decision boundaries of classifying unit balls obtained by training fully connected traditional neural networks and AdderNets with two layers, $n$ hidden units and ReLU activation functions. As the figure shows, the traditional network using multiplications fails to classify these points when $n=1$ and $n=2$ and obtains good results when $n=3$. In fact, the traditional network utilizes the cross-correlation between input data and weight parameters (\emph{i.e.}, the classifier) to calculate the output. Thus the classification results of the given data shown in figrue~\ref{fig:ball} are divided by lines. In contrast, AdderNet utilizes the $\ell_1$-distance and the points are divided by centrals. So, AdderNet can well distinguish all input samples even when $n=1$.

\textbf{Multiple Unit Balls.} We further construct the classification tasks on the multiple unit ball dataset, where the label $z_i$ for the input sample $(x_i,y_i) $ is calculated as
\begin{equation}
z_i=\left\{
\begin{aligned}
1 & , & \sqrt{(x_i-10)^2+(y_i-10)^2} < 10, \\
1 & , &\sqrt{(x_i+10)^2+(y_i+10)^2} < 10, \\
0 & , & \mbox{otherwise},
\end{aligned}
\right.
\end{equation}
where $\{(x_i, y_i)\}$ generated from a two-dimensional normal distribution with mean of 0 and variance of 15.

\begin{figure}[h]
	\centering
	\begin{tabular}{cccc}
		Training Data  &  $n=3$  &  $n=5$ &  $n=10$ \\
		\includegraphics[width=0.2\linewidth]{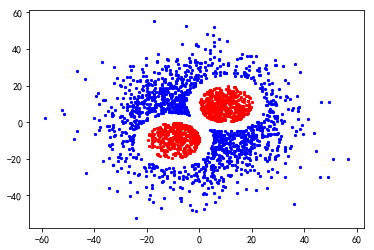}&  \includegraphics[width=0.2\linewidth]{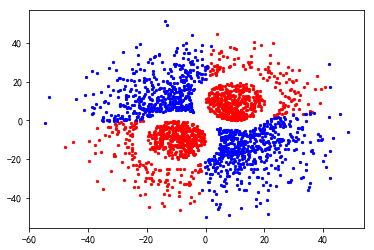} &  \includegraphics[width=0.2\linewidth]{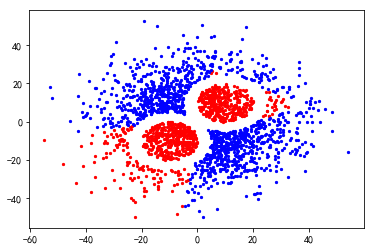}& \includegraphics[width=0.2\linewidth]{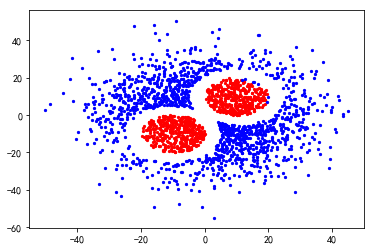}\\
		&   \includegraphics[width=0.2\linewidth]{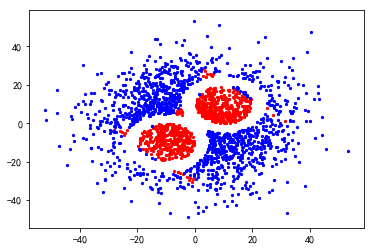} &  \includegraphics[width=0.2\linewidth]{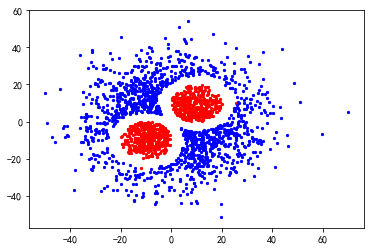}& \includegraphics[width=0.2\linewidth]{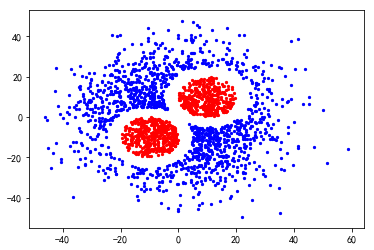}
	\end{tabular}
	\caption{Decision boundaries of classifying multiple unit balls obtained by training fully connected networks (top row) and AdderNets (bottom row) with different number of hidden units.}
	\label{fig:mulball}
\end{figure}

\begin{table*}[t]
	\centering
	\caption{Classification results on the CIFAR-10 and CIFAR-100 datasets.}
	\begin{tabular}{|c|c|c|c|c|c|c|}
		\hline
		\textbf{Model}&	\textbf{Method} & \textbf{\#Mul.} & \textbf{\#Add.} & \textbf{XNOR}  & \textbf{CIFAR-10} & \textbf{CIFAR-100} \\
		\hline	
		\hline	
		&BNN & 0 & 0.65G &  0.65G & 89.80\% & 67.24\% \\
		\cline{2-7}		
		&$\ell_1$-AddNN & 0 & 1.30G &0  & 93.72\% & 74.63\% \\
		\cline{2-7}		
		VGG-small&$\ell_2$-AddNN & 0.65G & 1.30G &0  & 94.20\% & 76.01\% \\
		\cline{2-7}
		&$\ell_1$-AddNN v2 & 0 & 1.30G &0  & 93.75\% & 74.85\% \\
		\cline{2-7}		
		&CNN &  0.65G& 0.65G& 0 &94.25\% & 75.96\% \\
		\hline
		\hline	
		&BNN & 0 &41.17M  &  41.17M   & 84.87\% & 54.14\% \\
		\cline{2-7}	
		&$\ell_1$-AddNN & 0 & 82.34M& 0  & 91.84\% & 67.60\% \\
		\cline{2-7}	
		ResNet-20&$\ell_2$-AddNN & 41.17M & 82.34M& 0  & 92.90\% & 68.71\% \\
		\cline{2-7}	
		&$\ell_1$-AddNN v2 & 0 & 82.34M& 0  & 92.31\% & 67.81\% \\
		\cline{2-7}		
		&CNN & 41.17M & 41.17M &  0 &92.93\% & 68.75\% \\
		\hline
		\hline	
		&BNN & 0 &69.12M  &  69.12M   & 86.74\% & 56.21\% \\
		\cline{2-7}	
		&$\ell_1$-AddNN & 0 & 138.24M& 0  & 93.01\% & 69.02\% \\
		\cline{2-7}	
		ResNet-32&$\ell_2$-AddNN &  69.12M  & 138.24M& 0  & 93.55\% & 70.48\% \\
		\cline{2-7}	
		&$\ell_1$-AddNN v2 & 0 & 138.24M& 0  & 93.10\% & 69.32\% \\
		\cline{2-7}		
		&CNN & 69.12M & 69.12M &  0 &93.59\% & 70.46\% \\
		\hline
	\end{tabular}
	\label{tab:cls}
\end{table*}

Since the classification task with multiple unit balls is more complex than that with a single ball, we use $n=3,5,10$ as the numbers of hidden units. As shown in figure~\ref{fig:mulball}, the traditional networks make some mistakes when $n=3,5$ and successfully classify most data points when $n=10$. For AdderNet, few data points are misclassified when $n=3$, and all data points are correctly classified when $n=5,10$. This shows that for certain problems where classes are centered, AdderNets can be superior to traditional neural networks.

\textbf{Linear Classification.} In this example, we consider a linear classification task using traditional networks and AdderNets to verify their capacity. The label $z_i$ for the input sample $(x_i,y_i)$ is calculated as
\begin{equation}
z_i=\left\{
\begin{aligned}
1 & , & x_i \times y_i \geq 0 , \\
0 & , & x_i \times y_i < 0,
\end{aligned}
\right.
\end{equation}
where $\{(x_i, y_i)\}$ generated from a two-dimensional normal distribution with mean of 0 and variance of 10.

\begin{figure}[h]
	\centering
	\begin{tabular}{cccc}
		Training Data  &  $n=1$  &  $n=2$ &  $n=3$ \\
		\includegraphics[width=0.2\linewidth]{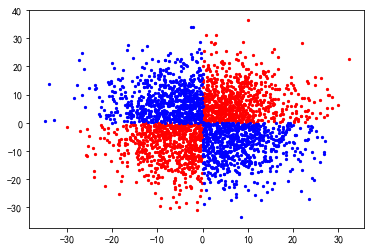}&  \includegraphics[width=0.2\linewidth]{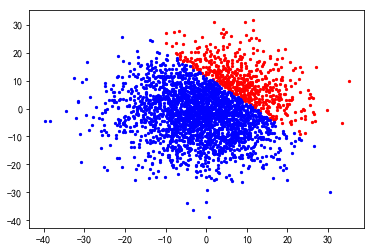} &  \includegraphics[width=0.2\linewidth]{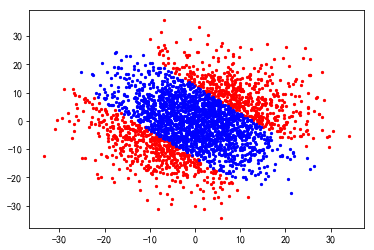}& \includegraphics[width=0.2\linewidth]{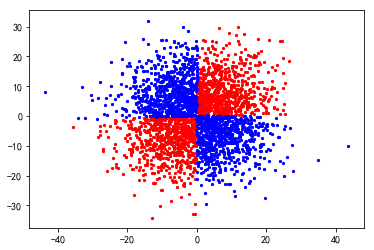}\\
		&   \includegraphics[width=0.2\linewidth]{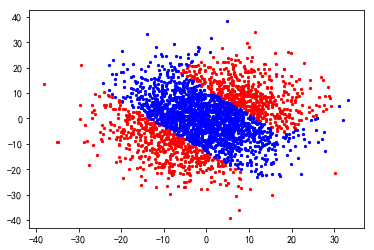} &  \includegraphics[width=0.2\linewidth]{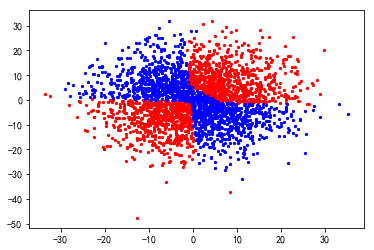}& \includegraphics[width=0.2\linewidth]{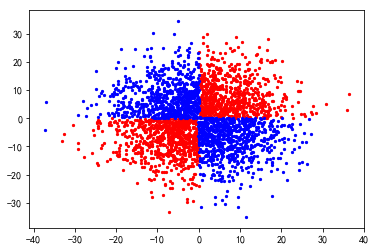}
	\end{tabular}
	\caption{Decision boundaries of classifying linear classification obtained by training fully connected networks (top row) and AdderNets (bottom row) with different number of hidden units.}
	\label{fig:line}
\end{figure}

Figure~\ref{fig:line} shows the classification results. When $n=1$ and $n=2$, neither AdderNets nor traditional networks performs well on this problem. When $n=3$, they can successfully classify all data points. The results empirically indicate that the adder neural network can also be regarded as a universal approximator and the approximation power increases with more hidden units.

\subsection{Experiments on MNIST}\label{sec:clas}

To illustrate the effectiveness of the proposed AdderNets, we first train a LeNet-5-BN~\cite{lenet} on the MNIST dataset. The images are resized to $32\times32$ and are pro-precessed following~\cite{lenet}. The networks are optimized using Nesterov Accelerated Gradient (NAG), and the weight decay and the momentum were set as $5\times10^{-4}$ and 0.9, respectively. We train the networks for 50 epochs using the cosine learning rate decay~\cite{loshchilov2016sgdr} with an initial learning rate 0.1. The batch size is set as 256. For the proposed AdderNets, we replace the convolutional filters in LeNet-5-BN with our adder filters. Note that the fully connected layer can be regarded as a convolutional layer, we also replace the multiplications in the fully connect layers with subtractions.

The convolutional neural network achieves a $99.4\%$ accuracy with $\sim$435K multiplications and $\sim$435K additions. By replacing the multiplications in convolution with additions, the proposed AdderNet achieves a 99.4\% accuracy, which is the same as that of CNNs, with $\sim$870K additions and almost no multiplication.
In fact, the theoretical latency of multiplications in CPUs is also larger than that of additions and subtractions. There is an instruction table~\footnote{www.agner.org/optimize/instruction\_tables.pdf} which lists the instruction latencies, throughputs and micro-operation breakdowns for Intel, AMD and VIA CPUs. For example, in VIA Nano 2000 series, the latency of float multiplication and addition is 4 and 2, respectively. The AdderNet using LeNet-5 model will have $\sim$1.7M latency while CNN will have $\sim$2.6M latency in this CPU. In conclusion, the AdderNet can achieve similar accuracy with CNN but have fewer computational cost and latency. Noted that CUDA and cuDNN optimized adder convolutions are not yet available, we do not compare the actual inference time.

\begin{table*}[t]
	\centering
	\caption{Classification results on the ImageNet datasets.}
	\begin{tabular}{|c|c|c|c|c|c|c|}
		\hline
		\textbf{Model} & \textbf{Method} & \textbf{\#Mul.} & \textbf{\#Add.} & \textbf{XNOR} & \textbf{Top-1 Acc.} & \textbf{Top-5 Acc.} \\
		\hline	
		\hline	
		& BNN & 0 & 1.8G &  1.8G &  51.2\% &  73.2\% \\
		\cline{2-7}	
		&$\ell_1$-AddNN & 0 & 3.6G &0  & 67.0\% & 87.6\% \\
		\cline{2-7}		
		ResNet-18& $\ell_2$-AddNN &1.8G & 3.6G &0  & 69.9\% & 89.1\% \\
		\cline{2-7}	
		&$\ell_1$-AddNN v2& 0 & 3.6G &0  & 69.1\% & 88.4\% \\
		\cline{2-7}	
		&CNN &  1.8G& 1.8G&  0  & 69.8\% & 89.1\% \\
		\hline
		\hline	
		& BNN & 0 & 3.9G & 3.9G & 55.8\% &  78.4\% \\
		\cline{2-7}	
		&$\ell_1$-AddNN & 0 & 7.7G &0  & 74.9\% & 91.7\% \\
		\cline{2-7}
		ResNet-50& $\ell_2$-AddNN & 3.9G & 7.7G &0  & 76.1\% & 92.8\% \\
		\cline{2-7}		
		&$\ell_1$-AddNN v2 & 0 & 7.7G &0  & 75.7\% & 92.3\% \\
		\cline{2-7}
		&CNN &  3.9G& 3.9G&  0  & 76.2\% & 92.9\% \\
		\hline
	\end{tabular}
	\label{tab:ImageNet}
\end{table*}

\subsection{Experiments on CIFAR}

We then evaluate our method on the CIFAR dataset, which consist of $32\times32$ pixel RGB color images. Since the binary networks~\cite{zhou2016dorefa} can use the XNOR operations to replace multiplications, we also compare the results of binary neural networks (BNNs). We use the same data augmentation and pro-precessing in He~\etal~\cite{he2016deep} for training and testing. Following Zhou~\etal~\cite{zhou2016dorefa}, the learning rate is set to 0.1 in the beginning and then follows a polynomial learning rate schedule. The models are trained for 800 epochs with a 256 batch size. We follow the general setting in binary networks to set the first and last layers as full-precision convolutional layers. In AdderNets, we use the same setting for a fair comparison. The hyper-parameter $\eta$ is set to 0.1 following the experiments on the MNIST dataset.

The classification results are reported in Table~\ref{tab:cls}. Since computational cost in batch normalization layer, the first layer and the last layer are significantly less than other layers, we omit these layers when counting FLOPs. We first evaluate the VGG-small model~\cite{cai2017deep} in the CIFAR-10 and CIFAR-100 dataset. As a result, the AdderNets achieve nearly the same results with CNNs with no multiplication. Although the model size of BNN is much smaller than those of AdderNet and CNN, its accuracies are much lower. We then turn to the widely used ResNet models (ResNet-20 and ResNet-32) to further investigate the performance of different networks. As for the ResNet-20, The convolutional neural networks achieve the highest accuracy but with a large number of multiplications (41.17M). The proposed AdderNets achieve a 92.31\% accuracy in CIFAR-10 and a 67.81\% accuracy in CIFAR-100 without multiplications, which is comparable with CNNs. In contrast, the BNNs only achieve 84.87\% and 54.14\% accuracies in CIFAR-10 and CIFAR-100. The results in ResNet-32 also suggest that the proposed AdderNets can achieve similar results with conventional CNNs.

\subsection{Experiments on ImageNet}

We next conduct experiments on the ImageNet dataset~\cite{krizhevsky2012imagenet}, which consist of $224\times224$ pixel RGB color images. We use ResNet-18 model to evaluate the proposed AdderNets follow the same data augmentation and pro-precessing in He~\etal~\cite{he2016deep}. We train the AdderNets for 300 epochs utilizing the cosine learning rate decay~\cite{loshchilov2016sgdr}. These networks are optimized using Nesterov Accelerated Gradient (NAG), and the weight decay and the momentum are set as $10^{-4}$ and 0.9, respectively. The batch size is set as 256 and the hyper-parameter in AdderNets is the same as that in CIFAR experiments. 

Table~\ref{tab:ImageNet} shows the classification results on the ImageNet dataset by exploiting different nerual networks. The convolutional neural network achieves a 69.8\% top-1 accuracy and an 89.1\% top-5 accuracy in ResNet-18. However, there are 1.8G multiplications in this model, which bring enormous computational complexity. Since the addition operation has smaller computational cost than multiplication, we propose AdderNets to replace the multiplications in CNNs with subtractions. As a result, our AdderNet achieve a 69.1\% top-1 accuracy and an 88.4\% top-5 accuracy in ResNet-18, which demonstrate the adder filters can extract useful information from images. Rastegari~\etal~\cite{rastegari2016xnor} proposed the XNOR-net to replace the multiplications in neural networks with XNOR operations. Although the BNN can achieve high speed-up and compression ratio, it achieves only a 51.2\% top-1 accuracy and a 73.2\% top-5 accuracy in ResNet-18, which is much lower than the proposed AdderNet. We then conduct experiments on a deeper architecture (ResNet-50). The BNN could only achieve a 55.8\% top-1 accuracy and a 78.4\% top-5 accuracy using ResNet-50. In contrast, the proposed AdderNets can achieve a 75.7\% top-1 accuracy and a 92.3\% top-5 accuracy, which is closed to that of CNN (76.2\% top-1 accuracy and 92.9\% top-5 accuracy).

\subsection{Visualization Results}

\begin{figure}[h]
	\centering
	\includegraphics[width=0.9\linewidth]{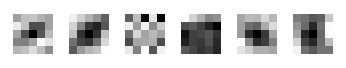} \\
	(a) Visualization of filters of AdderNets\\
	\includegraphics[width=0.9\linewidth]{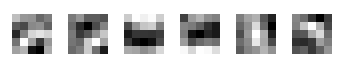} \\
	(b)  Visualization of filters of CNNs  
	\caption{Visualization of filters in the first layer of LeNet-5-BN on the MNIST dataset. Both of them can extract useful features for image classification.}
	\label{Fig:visualfilters}
\end{figure}

\textbf{Visualization on filters.} We visualize the filters of the LeNet-5-BN network in Figure~\ref{Fig:visualfilters}. Although the AdderNets and CNNs utilize different distance metrics, filters of the proposed adder networks (see Figure~\ref{Fig:visualfilters} (a)) still share some similar patterns with convolution filters (see Figure~\ref{Fig:visualfilters} (b)). The visualization experiments further demonstrate that the filters of AdderNets can effectively extract useful information from the input images and features. 

\begin{table*}[t]
	\begin{center}
		\caption{Effectiveness of different training strategy in AdderNets.}
		\label{table:abl}
		\begin{tabular}{|c|c|c|c|c|c|c|c|c|}
			\hline
			full-precision gradient & & \checkmark & &&\checkmark & \checkmark&& \checkmark \\
			\hline
			$\ell_2$ to $\ell_1$ training & & & \checkmark & &\checkmark& &\checkmark& \checkmark \\
			\hline
			Adaptive learning rate &  & &  & \checkmark & &\checkmark&\checkmark& \checkmark \\
			\hline
			\textbf{Top 1 accuracy} & 80.77\% &85.34\%& 86.21\% & 90.14\% & 87.61\% &91.83\% &91.80\% & 92.32\%\\
			\hline
		\end{tabular}
	\end{center}
\end{table*}

\begin{figure*}[t]
	\centering
		\begin{tabular}{cc}
		\includegraphics[width=0.46\linewidth]{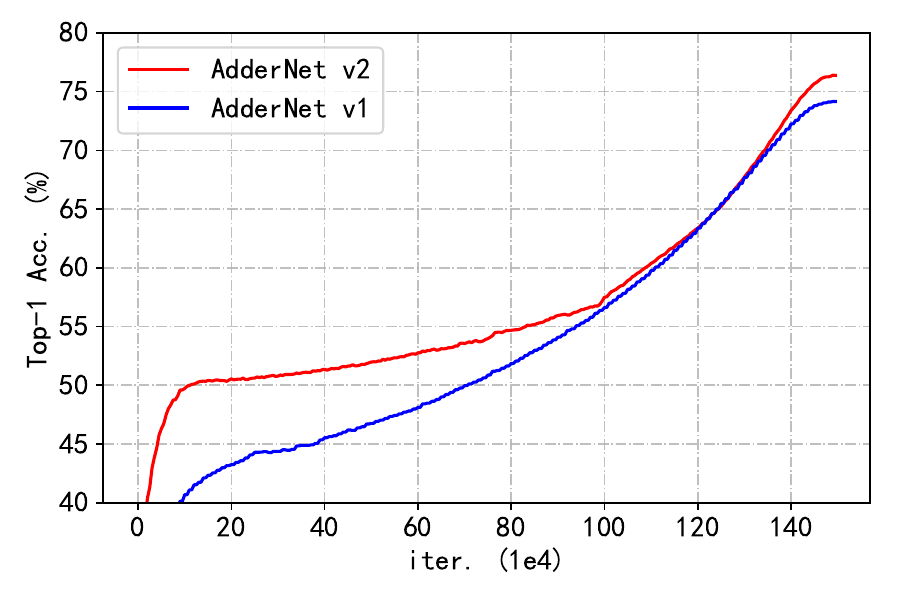} &
		\quad \includegraphics[width=0.46\linewidth]{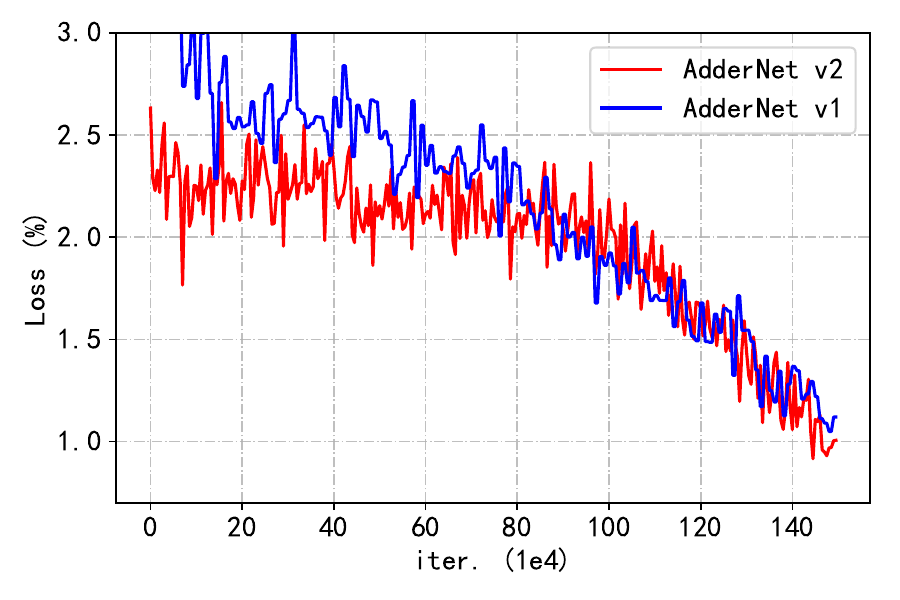} \\
		(a) Accuracy   &(b)  Loss  
	\end{tabular}
	\caption{Learning curve of AdderNet v1 and v2. By using the proposed $\ell_2$ to $\ell_1$ training strategy, AdderNet achieves higher accuracy (75.7\%) with smaller loss on ResNet-50 architecture.}
	\label{Fig:abl}
\end{figure*}

\textbf{Visualization on features.} The AdderNets utilize the $\ell_1$-distance to measure the relationship between filters and input features instead of cross correlation in CNNs. Therefore, it is important to further investigate the difference of the feature space in AdderNets and CNNs. We train a LeNet++ on the MNIST dataset following~\cite{centerloss}, which has six convolutional layers and a fully-connected layer for extracting powerful 3D features. Numbers of neurons in each convolutional layer are 32, 32, 64, 64, 128, 128, and 2, respectively.
For the proposed AdderNets, the last fully connected layers are replaced with the proposed add filters. 

The visualization results are shown in Figure~\ref{Fig:visualfea}. The convolutional neural network calculates the cross correlation between filters and inputs. If filters and inputs are approximately normalized, convolution operation is then equivalent to calculate cosine distance between two vectors. That is probably the reason that features in different classes are divided by their angles in Figure~\ref{Fig:visualfea}. In contrast, AdderNets utilize the $\ell_1$-norm to distinguish different classes. Thus, features tend to be clustered towards different class centers. The visualization results demonstrate that the proposed AdderNets could have the similar discrimination ability to classify images as CNNs.

\begin{figure}[h]
	\centering
	\includegraphics[width=1.0\linewidth]{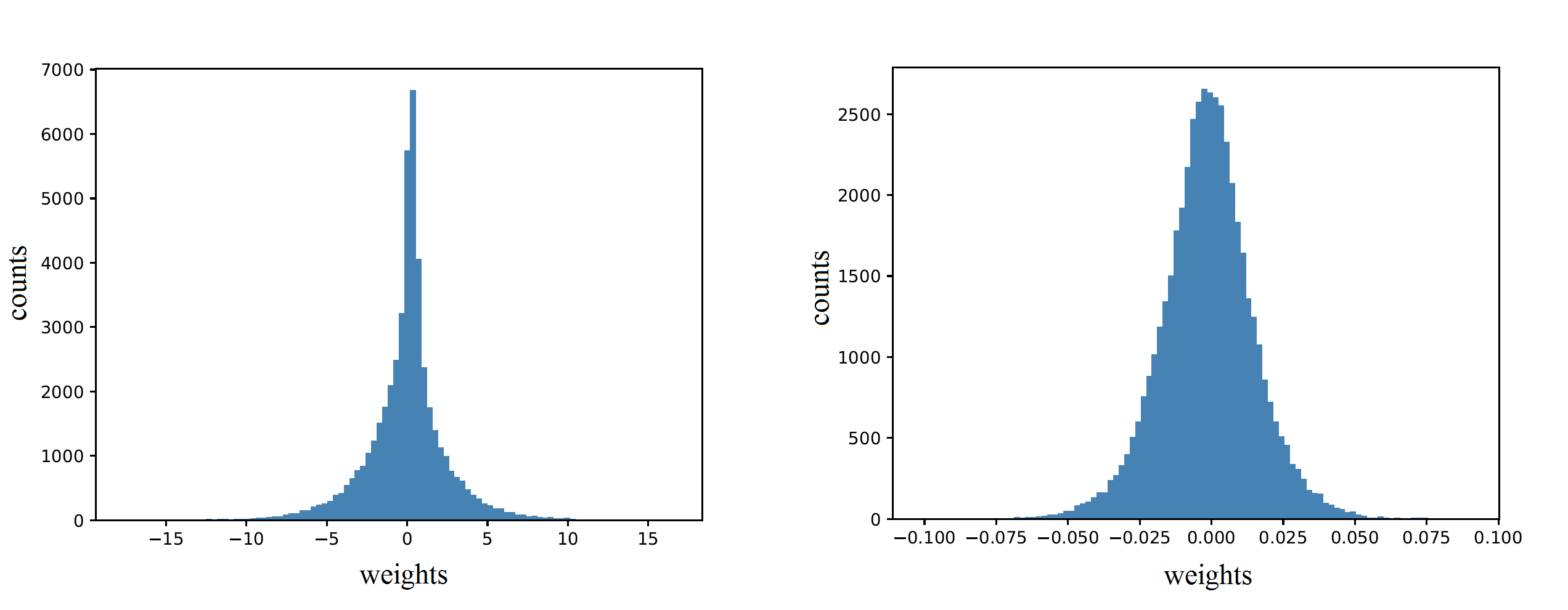}
	\caption{Histograms over the weights with AdderNet (left) and CNN (right). The weights of AdderNets follow Laplace distribution while those of CNNs follow Gaussian distribution. }
	\label{Fig:his}
\end{figure}

\textbf{Visualization on distribution of weights.} We then visualize the distribution of weights for the 3rd convolution layer on LeNet-5-BN. As shown in Figure~\ref{Fig:his}, the distribution of weights with AdderNets is close to a Laplace distribution while that with CNNs looks more like a Gaussian distribution. In fact, the prior distribution of $\ell_1$-norm is Laplace distribution~\cite{stigler1986history} and that of $\ell_2$-norm is Gaussian distribution~\cite{rennie2003l2}.

\subsection{Ablation Study}

We propose to use an $\ell_2$ to $\ell_1$ training strategy to update the filters in our adder filters and design an adaptive learning rate scaling for deal with different layers in AdderNets. It is essential to evaluate the effectiveness of these components. As shown in Table~\ref{table:abl}, applying the proposed techniques can successfully improve the performance of AdderNets. We further investigate the influence of the $\ell_2$ to $\ell_1$ training strategy in Figure~\ref{Fig:abl}. We find that AdderNet v2 can be trained more easily with the help of the $\ell_2$ norm.

\begin{table}[h]
	\centering
	\caption{The impact of parameter $\eta$ using LeNet-5-BN on the MNIST dataset.}
	\begin{tabular}{|c|c|c|c|c|c|}
		\hline
		$\eta$ & 1 & 0.5 & 0.2 & 0.1 & 0.05 \\
		\hline	
		\hline	
		Acc. (\%) & 99.28 &99.35&99.40& 99.35 & 99.30 \\
		\hline
	\end{tabular}
	\label{tab:impact}
\end{table}

\textbf{Impact of parameters.} As discussed above, the proposed adaptive learning rate scaling has a hyper-parameter: $\eta$. We then test its impact on the accuracy of the student network by conducting the experiments on the MNIST dataset. We use LeNet-5-BN as the backbone of AdderNet. Other experimental settings are same as mentioned in Sec.~\ref{sec:clas}. It can be seen from Table~\ref{tab:impact} that the AdderNets trained utilizing the adaptive learning rate scaling achieves the highest accuracy (99.40\%) when $\eta$ = 0.2. Based on the above analysis, we keep the setting of hyper-parameters for the proposed method.

\begin{table}[h]
	\centering
	\caption{The impact of number of epoch when $p=1$.}
	\begin{tabular}{|c|c|c|c|c|c|}
		\hline
		epoch & 300 & 400 & 500 & 600 & 700 \\
		\hline	
		\hline	
		Acc. (\%) & 91.91 &92.00&92.11& 92.31 & 91.76 \\
		\hline
	\end{tabular}
	\label{tab:decay}
\end{table}

\textbf{Decay of $p$.} We use the $\ell_p$ norm to training AdderNet, where the $p$ is linearly decayed from 2 to 1. We analyze the impact of number of epoch when $p$ is decayed to 1 on the CIFAR-10 dataset. The network is trained for 800 epochs. As shown in Table~\ref{tab:decay}. If the number of decayed epoch is too large (\eg 700 epoch), the network achieves only 91.76\% accuracy, since the AdderNet is not fully trained in $\ell_1$ norm. If the number of decayed epoch is too small (\eg 300 epoch), the network is rarely trained with $\ell_p$ norm ($p>1$), therefore the $\ell_p$ cannot help the training of $\ell_1$ norm. The network achieves the highest performance with 600 decayed epochs, and we keep this setting of hyper-parameters in other experiments.

\section{Conclusion}
The role of classical convolutions used in deep CNNs is to measure the similarity between features and filters, and we are motivated to replace convolutions with more efficient similarity measure. We investigate the feasibility of replacing multiplications by additions in this work. An AdderNet is explored to effectively use addition to build deep neural networks with low computational costs. This kind of networks calculate the $\ell_1$-norm distance between features and filters. We proved that AdderNets are universal approximators, which is analogous to the universal approximation results for two-layer traditional network networks~\cite{hornik1989multilayer} and width-bounded deep networks~\cite{lu2017expressive}, etc. The universal approximation theorem gives us assurance that AdderNets can solve an arbitrary problem. We further provided empirical results showing that AdderNets can indeed approximate a complex decision boundary with a sufficient number of hidden units or depths. Corresponding optimization method is developed by using $\ell_p$-norm. Experiments conducted on benchmark datasets show that AdderNets can well approximate the performance of CNNs with the same architectures, which could have a huge impact on future hardware design. Visualization results also demonstrate that the adder filters are promising to replace original convolution filters for computer vision tasks.

%

\bibliography{ref}
\bibliographystyle{IEEEtran}

%
\begin{IEEEbiography}
	[{\includegraphics[width=1in,height=1.25in,clip,keepaspectratio]{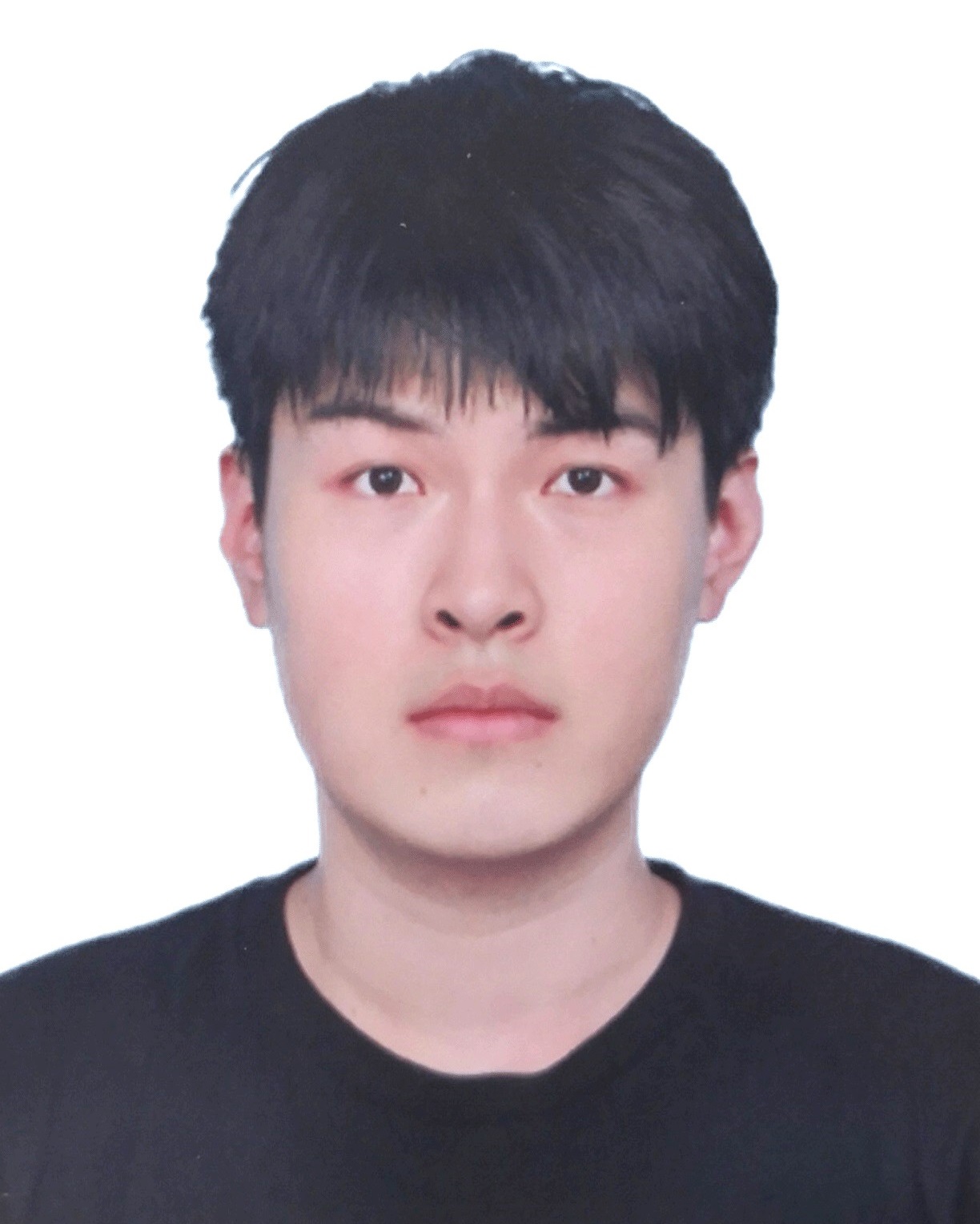}}]{Hanting Chen} received a B.E degree from Tongji University, China. Currently, he is a Ph.D. candidate with the Key Laboratory of Machine Perception (Ministry of Education) at Peking University. His research interests lie primarily in machine learning and computer vision.
\end{IEEEbiography}

\begin{IEEEbiography}
	[{\includegraphics[width=1in,height=1.25in,clip,keepaspectratio]{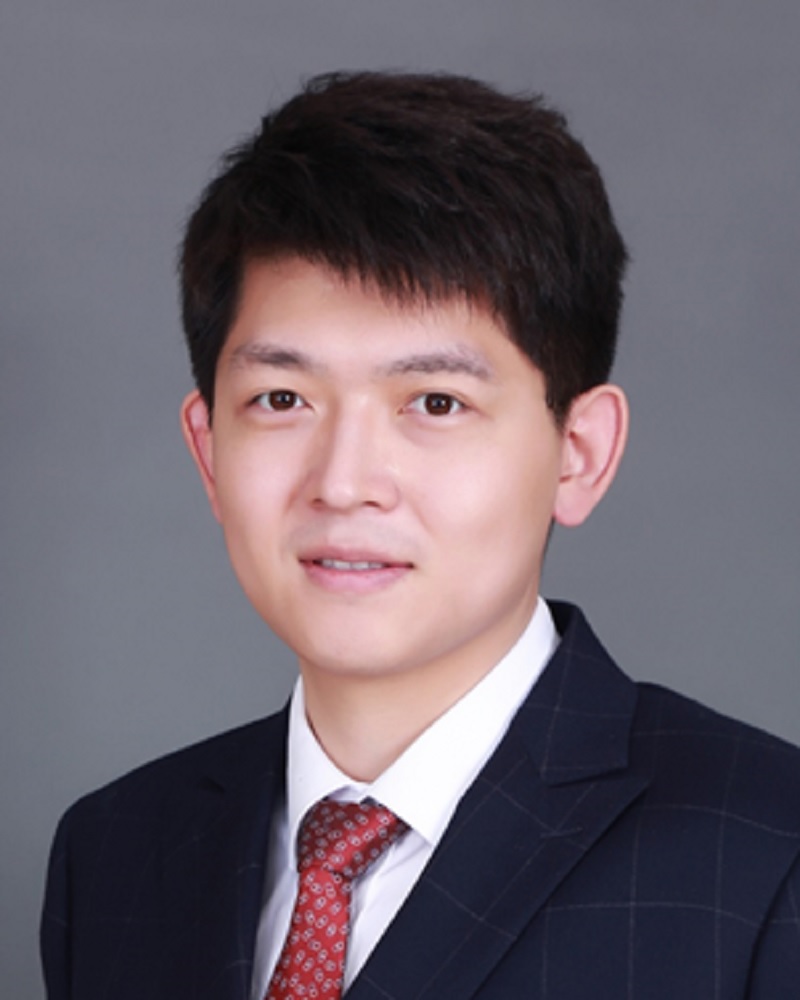}}]{Yunhe Wang} is currently a Senior Researcher at Noah's Ark Lab, Huawei Technologies. He received a Ph.D. degree from Peking University, China. His research interests mainly include machine learning, computer vision, and efficient deep learning. He has published over 50 papers in prestigious journals and conferences. Many of them are widely applied in industrial products, and received important awards. He leads the ``Adder Neural Networks'' project, which presents a new computing paradigm for artificial intelligence with significantly lower hardware costs. He regularly serves as PC/senior PC member for top conferences, \emph{e.g.}, NeurIPS, ICML, ICLR, CVPR, ICCV, IJCAI, and AAAI.
\end{IEEEbiography}

\begin{IEEEbiography}
	[{\includegraphics[width=1in,height=1.25in,clip,keepaspectratio]{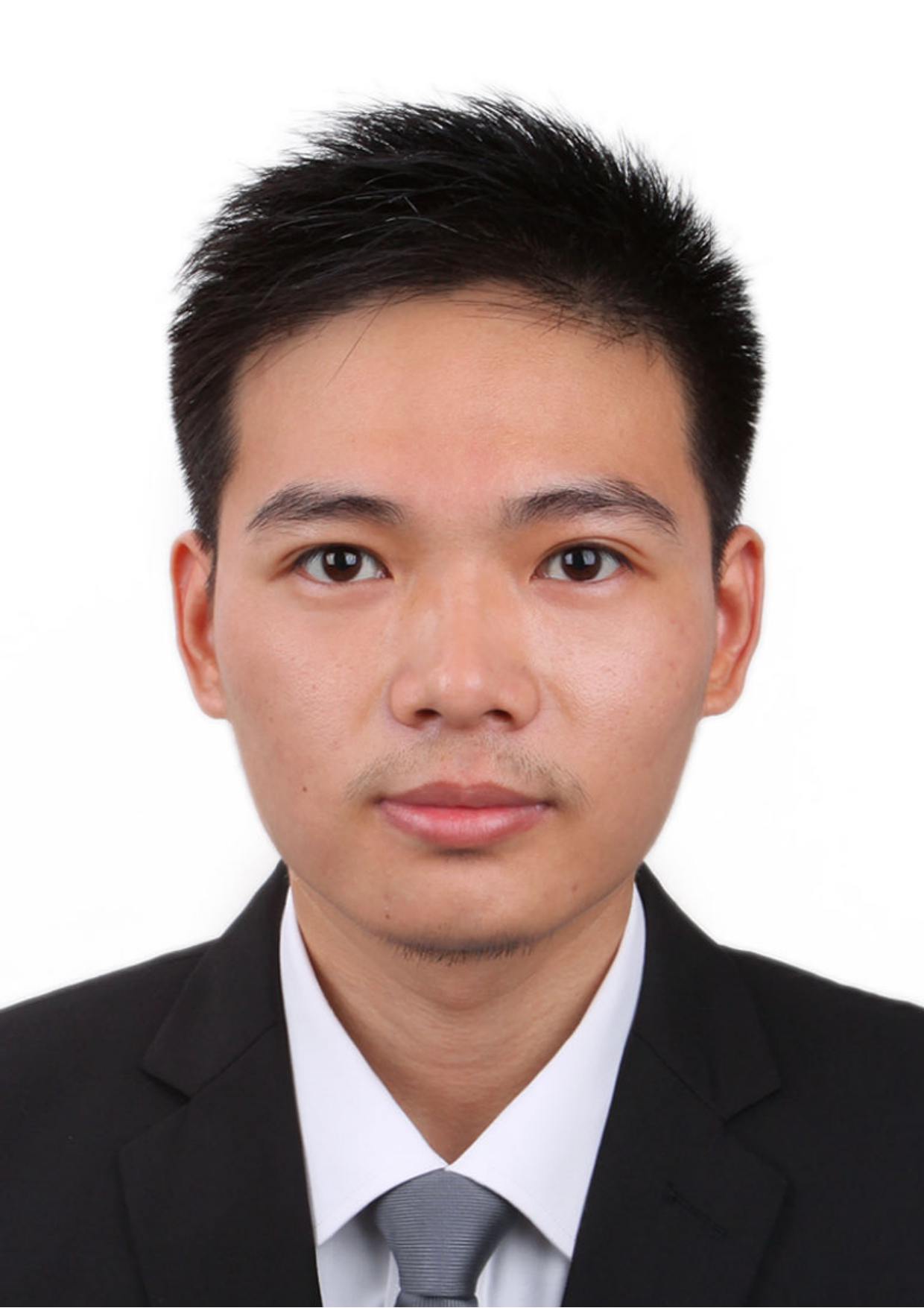}}]{Chang Xu} is Senior Lecturer and ARC DECRA Fellow at the School of Computer Science, University of Sydney. He received the Ph.D. degree from Peking University, China. His research interests lie in machine learning algorithms and related applications in computer vision. He has published over 100 papers in prestigious journals and top tier conferences. He has received several paper awards, including Distinguished Paper Award in IJCAI 2018. He regularly severed as the PC member or senior PC member for many conferences, \emph{e.g.}, NeurIPS, ICML, ICLR, CVPR, ICCV, IJCAI and AAAI. He has been recognized as Top Ten Distinguished Senior PC Member in IJCAI 2017.
\end{IEEEbiography}

\begin{IEEEbiography}
	[{\includegraphics[width=1in,height=1.25in,clip,keepaspectratio]{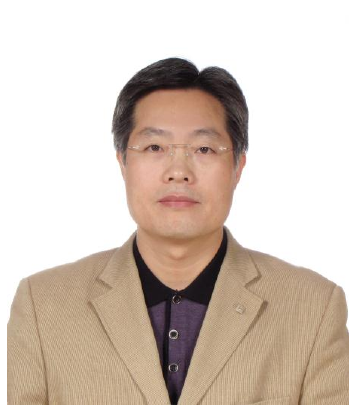}}]{Chao Xu} received the B.E. degree from Tsinghua University in 1988, the M.S. degree from University of Science and Technology of China in 1991 and the Ph.D degree from Institute of Electronics, Chinese Academy of Sciences in 1997. Between 1991 and 1994 he was employed as an assistant professor by University of Science and Technology of China. Since 1997 Dr. Xu has been with School of EECS at Peking University where he is currently a Professor. His research interests are in image and video coding, processing and understanding. He has authored or co-authored more than 80 publications and 5 patents in these fields.
\end{IEEEbiography}

\begin{IEEEbiography}
	[{\includegraphics[width=1in,height=1.25in,clip,keepaspectratio]{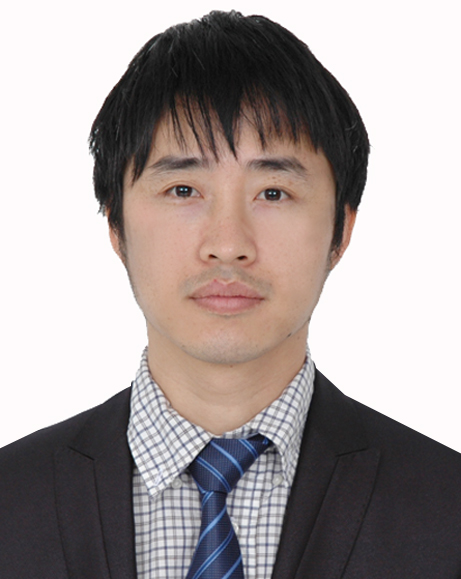}}]{Chunjing Xu} received the B.E. degree from Wuhan University, China, the M.S. degree from Peking University, China, and the Ph.D. degree from the Chinese University of Hong Kong, Hong Kong, China. He is currently a principal researcher with Huawei Noah's Ark Lab. His research interests lie primarily in deep learning, and computer vision.
\end{IEEEbiography}

\begin{IEEEbiography}
	[{\includegraphics[width=1in,height=1.25in,clip,keepaspectratio]{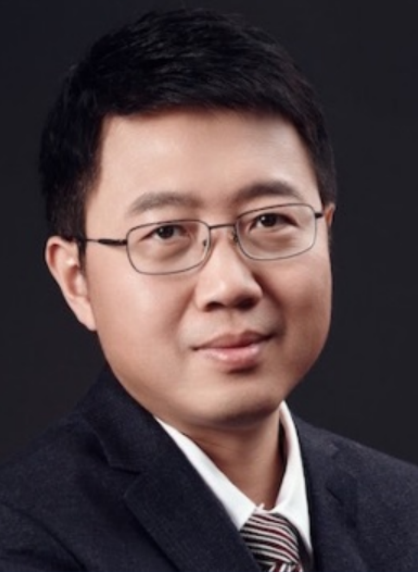}}]{Tong Zhang} is a professor of Computer Science and Mathematics at the Hong Kong University of Science and Technology. Previously, he was a professor at Rutgers university, and worked at IBM, Yahoo, Baidu, and Tencent. He received a B.A. in mathematics and computer science from Cornell University and a Ph.D. in Computer Science from Stanford University. His research interests include machine learning algorithms and theory, statistical methods for big data and their applications. He is a fellow of ASA and IMS, and he has been in the editorial boards of leading machine learning journals and program committees of top machine learning conferences.

\end{IEEEbiography}




\end{document}